\documentclass{article} 
\usepackage{iclr2025_conference,times}

\usepackage{amsmath,amsfonts,bm}

\def\eqref#1{equation~\ref{#1}}

\def\1{\bm{1}}

\DeclareMathAlphabet{\mathsfit}{\encodingdefault}{\sfdefault}{m}{sl}
\SetMathAlphabet{\mathsfit}{bold}{\encodingdefault}{\sfdefault}{bx}{n}

\usepackage{hyperref}
\usepackage{url}

\usepackage{booktabs}
\usepackage{adjustbox}
\usepackage{colortbl}
\usepackage{multirow}
\usepackage{amsmath} 
\usepackage{amssymb}
\usepackage{amsthm}
\usepackage{enumitem}
\usepackage{wrapfig}
\usepackage{algorithm}
\usepackage{algpseudocode}
\usepackage{xcolor}

\usepackage{etoc}
\etocdepthtag.toc{mtchapter}
\etocsettagdepth{mtchapter}{subsection}
\etocsettagdepth{mtappendix}{none}

\newtheorem{theorem}{Theorem}

\newtheorem{lemma}[theorem]{Lemma}

\newtheorem*{theorem*}{Theorem}

\title{Beyond Raw Detection Scores: Markov-Informed Calibration for Boosting Machine-Generated Text Detection}

\author{Chenwang~Wu$^1$
  \ \
  Yiu-ming~Cheung$^1$\thanks{Corresponding author: Yiu-ming Cheung (ymc@comp.hkbu.edu.hk).}
  \ \
  Shuhai~Zhang$^2$
  \ \
  Bo~Han$^1$
  \ \
  Defu~Lian$^3$ \\
  $^1$Department of Computer Science, Hong Kong Baptist University, Hong Kong, China\\
  $^2$School of Software Engineering, South China University of Technology, Guangzhou, China\\
  $^3$School of Computer Science, University of Science and Technology of China, Hefei, China\\
  \texttt{\{cscwwu, ymc, bhanml\}@comp.hkbu.edu.hk, shuhaizhangshz@gmail.com,} \\ \texttt{liandefu@ustc.edu.cn}\\
}

\iclrfinalcopy 
\begin{document}

\maketitle

\begin{abstract}
While machine-generated texts (MGTs) offer great convenience, they also pose risks such as disinformation and phishing, highlighting the need for reliable detection. Metric-based methods, which extract statistically distinguishable features of MGTs, are often more practical than complex model-based methods that are prone to overfitting. Given their diverse designs, we first place representative metric-based methods within a unified framework, enabling a clear assessment of their advantages and limitations. Our analysis identifies a core challenge across these methods: the token-level detection score is easily biased by the inherent randomness of the MGTs generation process. To address this, we theoretically and empirically reveal two relationships of context detection scores that may aid calibration: Neighbor Similarity and Initial Instability. We then propose a Markov-informed score calibration strategy that models these relationships using Markov random fields, and implements it as a lightweight component via a mean-field approximation, allowing our method to be seamlessly integrated into existing detectors. Extensive experiments in various real-world scenarios, such as cross-LLM and paraphrasing attacks, demonstrate significant gains over baselines with negligible computational overhead. The code is available at \url{https://github.com/tmlr-group/MRF_Calibration}.
\end{abstract}

\section{Introduction}
\label{sec: introduction}

In recent years, generative AI, represented by large language models (LLMs) \citep{achiam2023gpt,radford2019language}, has advanced rapidly, and the machine-generated texts (MGTs) they produce often match human writing in fluency, coherence, and diversity. While this technological breakthrough offers immense opportunities, it has also triggered widespread societal concerns, such as the spread of disinformation \citep{vykopal2024disinformation}, the violation of intellectual property rights \citep{yu2023codeipprompt}, and phishing attacks \citep{hong2012state}. Therefore, the research and development of MGT detection technologies hold significant theoretical and practical value in uncovering the distinct patterns of generated text and ensuring a trustworthy AI environment.

An effective detection method is to identify LLM's watermarks \citep{hou2024semstamp}, but this requires injecting watermarks into the LLM, which is often impractical due to high access permissions. Therefore, passive detection methods, including model-based and metric-based methods, have garnered significant attention. Model-based methods, which use a set of human- and machine-generated texts to train a binary classifier, such as OpenAI detector \citep{solaiman2019release}, ChatGPT detector \citep{guo2023close}, SeqXGPT \citep{wang2023seqxgpt}, and CoCo \citep{liu2022coco}. However, such models are often too complex, leading to overfitting to the training data. Instead, metric-based methods exploit the inherent statistical biases of LLM to discriminate MGTs, which is model-agnostic and has better generalization properties. These methods use metrics such as log-likelihood, log-rank, and entropy. Furthermore, methods such as DetectGPT \citep{mitchell2023detectgpt}, DNA-GPT \citep{yangdna2024}, and SimLLM \citep{nguyen2024simllm} detect MGTs by comparing the differences between a given text and a perturbed, regenerated, or continued text from an alternative model.

Obviously, metric-based methods exhibit diverse designs. Therefore, this paper first systematically examines several representative approaches, including Log-Likelihood \citep{solaiman2019release}, Log-Rank \citep{mitchell2023detectgpt}, Entropy \citep{gehrmann2019gltr}, DetectGPT \citep{mitchell2023detectgpt}, Fast-DetectGPT \citep{bao2024fast}, and DNA-GPT \citep{yangdna2024}, and situates them within a unified framework (\textbf{Section \ref{sec: unified_framework}}). Our analysis reveals that they share a threshold-based detection criterion, with some commonalities, such as the inclusion of auxiliary data (e.g., perturbed texts). This offers a theoretical basis for understanding their core mechanisms, strengths, and limitations.

Based on the unified framework, we summarize the core challenge of metric-based methods (\textbf{Section \ref{sec: unified_framework}}): the imprecision of token-level detection scores. Specifically, since these methods make decisions based on a threshold, their effectiveness is directly tied to score precision. However, randomness introduced by the LLM sampling mechanism can violate their underlying assumptions, leading to biased scores with low discrimination, as shown in "w/o refine" in Fig. \ref{fig: motivation_detectgpt1} (more results can be found in Appendix \ref{sec: more_movation}). Moreover, they tend to derive an overall detection score by naively aggregating token-level scores, failing to correct the underlying imprecision. Therefore, calibrating the token-level detection score is essential for improving overall detection performance.

\begin{wrapfigure}{r}{0.6\textwidth}
\vspace{-0.3cm}
	\centering
	\includegraphics[width=0.95\linewidth]{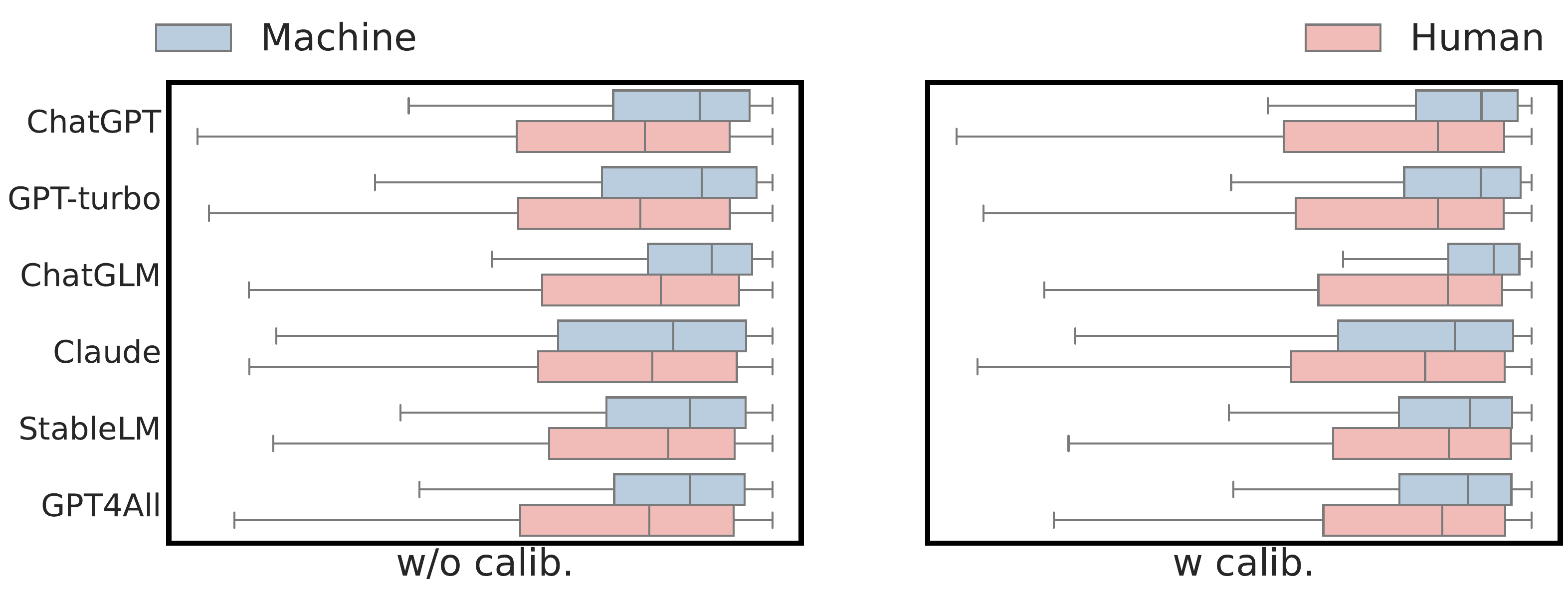}
 \vspace{-0.1cm}
	\caption{Distribution of token scores obtained by the DetectGPT method without and with score calibration in the Essay dataset. The proposed calibration method enhances the discriminative nature of the token scores.}
	\label{fig: motivation_detectgpt1}
\end{wrapfigure}
Given that detection scores are tied to tokens and the LLM generation process induces dependencies among tokens \citep{achiam2023gpt}, context tokens' detection scores may have relationships that are easy to overlook. Revealing and modeling these relationships may help calibrate these scores. Accordingly, we theoretically and empirically reveal two relationships among detection scores of context tokens (\textbf{Section \ref{sec: relation}}): \textbf{Neighbor Similarity}, where adjacent tokens exhibit similar detection scores, and \textbf{Initial Instability}, where the detection scores of initial tokens are unstable.

Finally, building on these two relationships, we propose a Markov-informed score calibration method to enhance MGT detection (\textbf{Section \ref{sec: methodology}}). Our method models the identified relationships through Markov random fields and, via a mean-field approximation, implements it as a lightweight iterative neural network. As shown by the more discriminative detection scores in the "w refine" method in Fig. \ref{fig: motivation_detectgpt1}, the proposed method boosts the discriminative nature of the scores. Notably, our method can be seamlessly stacked on top of existing detectors without architectural changes, providing flexibility. Compared with complex model-based approaches, our method introduces only a negligible 2×2 parameterization, making its computational delay negligible and less prone to overfitting. Extensive experiments demonstrate our method's enhanced effectiveness.
Our contributions can be summarized as follows:
\begin{itemize}[leftmargin = 10 pt]
    \item We view existing metric-based detection methods through a unified lens, which facilitates precise comparison and enables potential improvements.
    \item We theoretically and empirically demonstrate that token-level detection scores exhibit neighbor similarity and initial instability, offering avenues for improved detection.
    \item We propose a Markov-informed score calibration method and, via a mean-field approximation, implement it as a lightweight component that can be seamlessly integrated into existing detectors to further unlock their potential.
    \item We conduct extensive experiments across three datasets to demonstrate superior performance in diverse scenarios, including cross-LLM, cross-domain, mixed-text, and paraphrase attacks.
\end{itemize}

\section{A Unified Perspective on Metric-Based Detection}
\label{sec: unified_framework}

Although model-based methods have shown competitive potential in specific domains, they are often too complex, leading to a tendency to overfit their training data. This limitation requires them to retrain or fine-tune for newly released LLMs, which hinders their generalizability. In contrast, metric-based methods extract discriminative features from MGT, and their model-agnostic nature provides superior generalization potential. Given the diverse implementations of representative metric-based methods such as Log-Likelihood \citep{solaiman2019release}, Log-Rank \citep{gehrmann2019gltr}, Entropy \citep{gehrmann2019gltr}, DetectGPT \citep{mitchell2023detectgpt}, Fast-DetectGPT \citep{bao2024fast}, and DNA-GPT \citep{yangdna2024}, we first provide a systematic examination of them from a unified perspective. This facilitates a deeper understanding of their mechanisms and allows for a fair comparison of their strengths and weaknesses. As illustrated in Table \ref{tab: unified_view}, we compare these methods across data, score aggregation, and detection dimensions. Note that this paper does not discuss their diverse core metric designs. This is because we aim to design a general enhancement framework decoupled from specific detectors, requiring us to start from possible commonalities rather than diverse designs.
\begin{itemize}[leftmargin=*]
    \item \textbf{Data}. Log-likelihood, Log-Rank, and Entropy are computationally efficient as they rely solely on the original input text $s$. However, the detection error based on a single text may be large, because the randomness inherent in the LLM sampling mechanism may cause the MGT to deviate from these methods' underlying assumptions, e.g., Log-Rank assumes that the generated tokens have high rankings. In contrast, DetectGPT, Fast-DetectGPT, and DNA-GPT incorporate multiple perturbed (i.e., $s'$) or regenerated (i.e., $\tilde{s}$ and $\hat{s}$) samples, which mitigates the errors caused by randomness. However, this comes at the cost of increased computational overhead compared to single-text-based methods.
    \item \textbf{Score Aggregation}. Although these methods appear to calculate scores differently, they all tend to directly aggregate token scores to obtain the final text score, typically through summation. As discussed, the randomness introduced by the LLM generation process may cause token-level scores to be biased. Therefore, the direct aggregation of these potentially imprecise token scores may result in an inaccurate final detection score.
    \item \textbf{Detection}. These methods employ threshold-based detection mechanisms, whose effectiveness relies heavily on the accuracy of their calculated scores. Including uncalibrated, high-noise scores in threshold-based decision-making may lead to poor performance.
\end{itemize}
In summary, to enhance detection, existing methods incorporate more textual information (e.g., regenerated texts in DetectGPT and Fast-DetectGPT) and use different score calculation strategies (e.g., the Likelihood difference between the detected and regenerated text in DNAGPT). However, as we discussed, they fail to address the underlying token-level errors caused by inherent randomness, limiting their detection potential. Considering that detection scores are tied to tokens and the LLMs' generative mechanism induces dependencies among tokens, revealing and modeling the relationships between tokens may help correct score errors and thus improve detection effectiveness.
\begin{table*}[t]
\centering
\caption{Comparing existing metric-based methods from a unified view. Here, $s$ is the text to be detected containing $N$ tokens, $s'$ is the perturbed text generated by DetectGPT, $\tilde{s}$ and $\hat{s}$ are the regenerated texts of Fast-DetectGPT and DNA-GPT, respectively. Function $\mu(\cdot)$ and $\sigma(\cdot)$ represent the mean and standard deviation of the given set, respectively.}
\label{tab: unified_view}
\renewcommand\arraystretch{1.}
\begin{adjustbox}{width=1.\textwidth}
\begin{tabular}{c|c|c|c|c}
\toprule
\textbf{Method} & \textbf{Data} & \textbf{Token-level Score} & \textbf{Score Aggregation} & \textbf{Detection} \\
\midrule
Log-likelihood & $s$ & $\log p(s_t|s_{<t})$ & $\frac{1}{N-1}\sum_{t=2}^{N} \log p(s_t|s_{<t})$ & $score > \epsilon$ \\
Log-Rank & $s$ & $\mathrm{rank}(p(s_t|s_{<t}))$ & $\frac{1}{N-1}\sum_{t=2}^{N} \mathrm{rank}(p(s_t|s_{<t}))$ & $score > \epsilon$ \\
Entropy & $s$ & $\sum_{v \in V} p(v|s_{<t}) \log p(v|s_{<t})$ & $-\frac{1}{N-1}\sum_{t=2}^{N} \sum_{v \in V} p(v|s_{<t}) \log p(v|s_{<t})$ & $score > \epsilon$ \\
DetectGPT & $\{s, s'_1, s'_2, ..., s'_n\}$ & $p(s_t|s_{<t})$ &$\frac{\frac{1}{N-1} \sum_{t=2}^{N} p(s_t|s_{<t})-\mu\left(\{\frac{1}{N-1} \sum_{t=2}^{N} p(s'_{i,t}|s'_{i,<t})\}_i\right)}{\sigma\left(\{\frac{1}{N-1} \sum_{t=2}^{N} p(s'_{i,t}|s'_{i,<t})\}_i\right)}$ & $score > \epsilon$ \\
Fast-DetectGPT & $\{s, \tilde{s}_1, ..., \tilde{s}_n\}$ & $p(s_t|s_{<t})$  & $\frac{\frac{1}{N-1} \sum_{t=2}^{N} p(s_t|s_{<t})-\mu\left(\{\frac{1}{N-1} \sum_{t=2}^{N} p(\tilde{s}_{i,t}|s,\tilde{s}_{i,<t})\}_i\right)}{\sigma\left(\{\frac{1}{N-1} \sum_{t=2}^{N} p(\tilde{s}_{i,t}|s,\tilde{s}_{i,<t})\}_i\right)}$ & $score > \epsilon$ \\
DNA-GPT & $\{s, \hat{s}_1, ..., \hat{s}_n\}$ & $p(s_t|s_{<t})$  & $\frac{1}{N-1}\sum_{t=2}^{N} \log p(s_t|s_{1:t-1}) - \frac{1}{n} \sum_{i=1}^{n} \frac{1}{N_t-1} \sum_{t=2}^{N_t} \log p(\hat{s}_{i,t}|\hat{s}_{i,1:t-1})$ & $score > \epsilon$ \\
\bottomrule
\end{tabular}
\end{adjustbox}
\end{table*}

\section{Relation between Contextual Token-Level Detection Scores}
\label{sec: relation}

\begin{figure}[t]
    \centering
    \begin{minipage}[b]{0.49\textwidth}
        \centering
        \includegraphics[width=\textwidth]{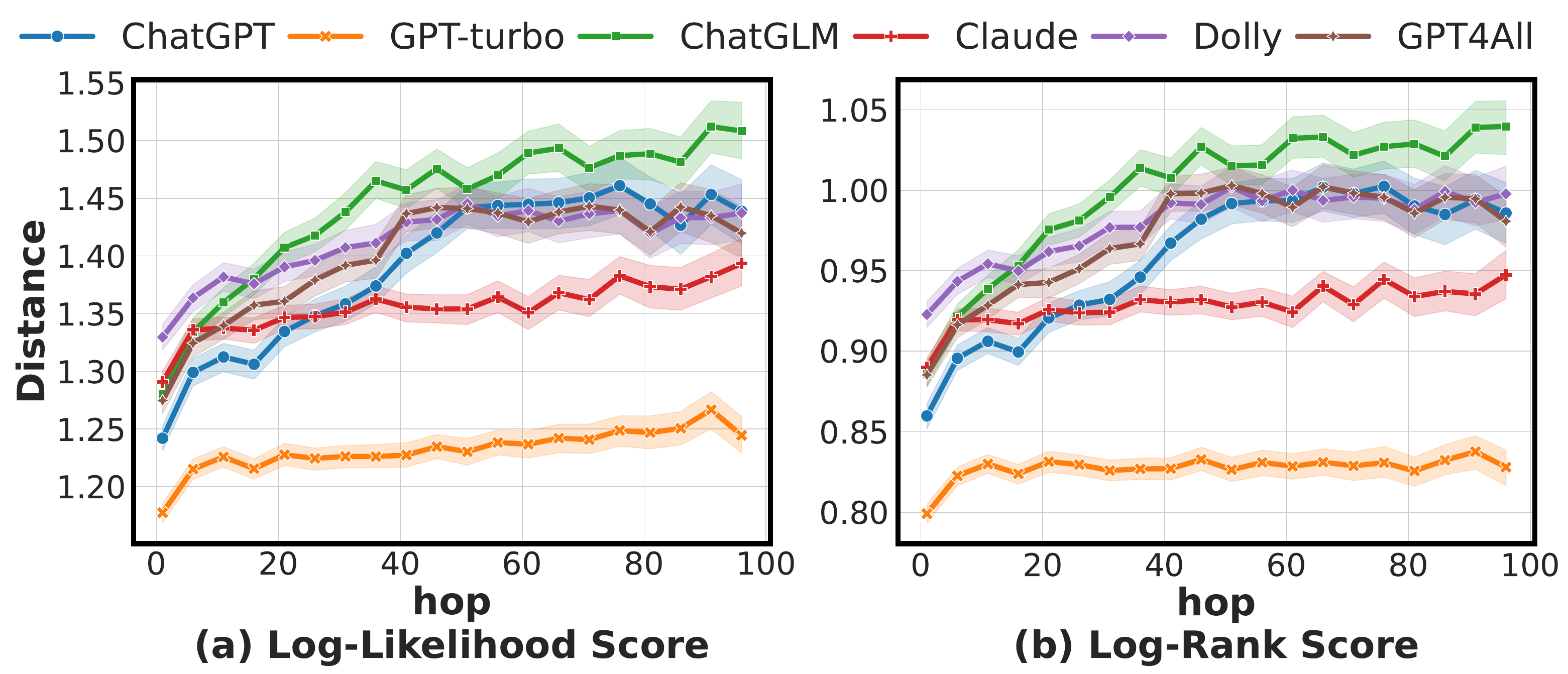}
        \vspace{-0.7cm}
        \caption{The detection score distances (Mean Absolute Difference) of neighbors at different hops in the Essay dataset. Log-likelihood and Log-Rank score are used.}
        \label{fig:motivation_relation}
    \end{minipage}
    \hfill 
    \begin{minipage}[b]{0.49\textwidth}
        \centering
        \includegraphics[width=\textwidth]{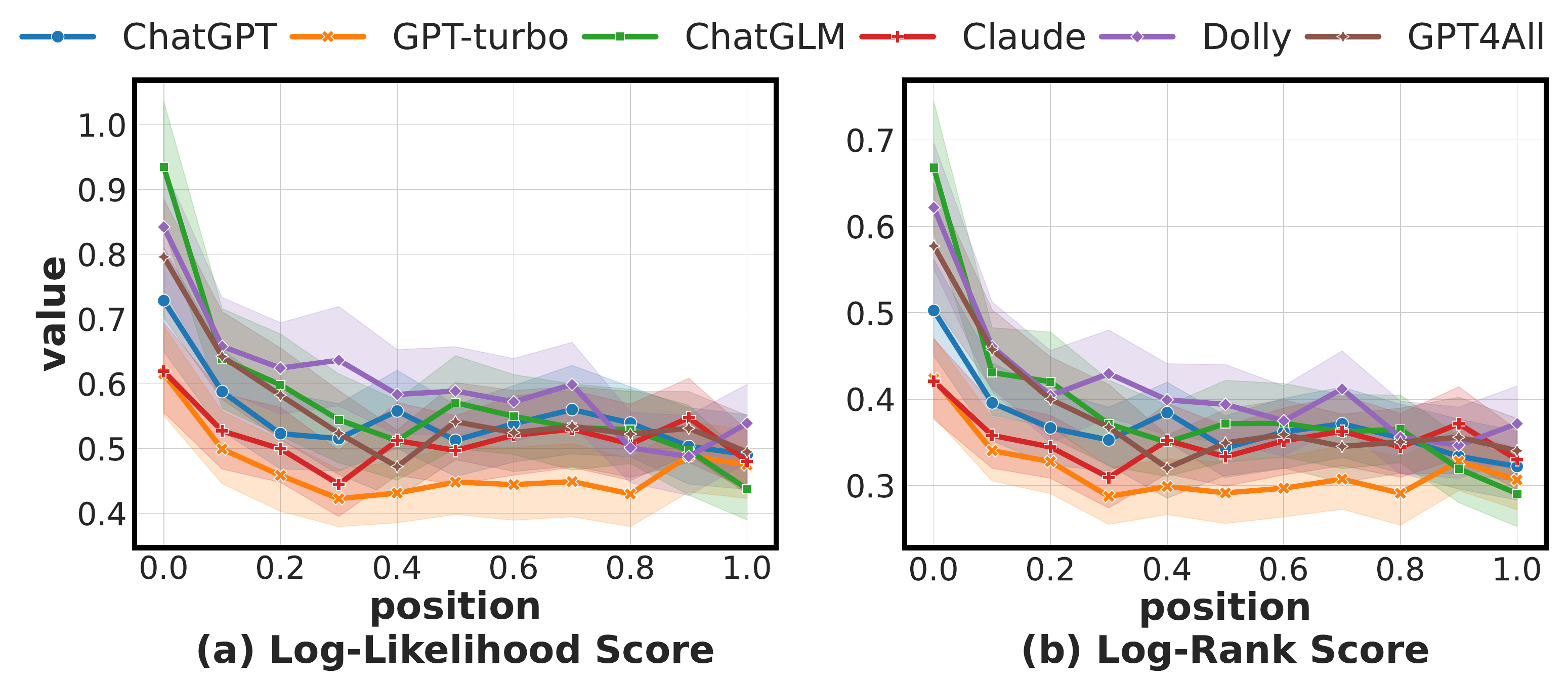}
        \vspace{-0.7cm}
        \caption{The detection score distances (Mean Absolute Difference) of 1-hop neighbors at different normalized relative positions in Essay. Log-likelihood and Log-Rank score are used.}
        \label{fig:motivation_position}
    \end{minipage}
\end{figure}

To understand the relationship between context tokens' detection scores, following existing work \citep{liu2023scissorhands}, we consider the token generation process of a simplified model: a single-layer transformer model with single-head attention:
\begin{equation}
\label{eq: simplified_model}
    x_{t+1}=\mathcal{F}\left(a_t\right),\ \
    \text { where }\ \ a_t=\operatorname{softmax}\left(1 / t \cdot x_t W_Q W_K^{\top} X_{t-1}^{\top}\right) X_{t-1} W_V W_O.
\end{equation}

$x_t \in \mathbb{R}^{1 \times d}$ is the embedding of token $s_t$, and $d$ denotes the embedding dimension. The matrix $X_{t-1} \in \mathbb{R}^{(t-1) \times d}$ is stacked by the embeddings $x_1, \ldots, x_{t-1}$, where the $j$-th row is $x_j$. $W_Q, W_K, W_V \in \mathbb{R}^{d \times d}$ and $W_O \in \mathbb{R}^{d \times d}$ are the attention weights. Following the attention block, an MLP block, denoted as $\mathcal{F}: \mathbb{R}^{1 \times d} \rightarrow \mathbb{R}^{1 \times d}$, is applied, and it is a two-layer network with skip connections:
$$
\mathcal{F}(x)=x+W_2 \operatorname{relu}\left(W_1 x\right).
$$
As shown in Table \ref{tab: unified_view}, the detection score of token $s_t$ is usually the function of $s_t$ (i.e., $x_i$), and $x_i$ is related to the attention scores $\alpha_{t-1}=\operatorname{softmax}\left(1 / t \cdot x_t W_Q W_K^{\top} X_{t-1}^{\top}\right)$ in Eq. (\ref{eq: simplified_model}). The following theorem will reveal the relationship between attention scores, which in turn help us understand the relationship between detection scores of context tokens.
\begin{theorem}
Let $\lambda_K, \lambda_Q, \lambda_V, \lambda_O$ be the largest singular values of parameters $W_K, W_Q, W_V, W_O$, respectively, and let $W=W_V W_O W_Q W_K^{\top}$. For the transformer defined in Eq. (\ref{eq: simplified_model}), assuming normalized inputs ($\left\|x_t\right\|_2=1$ for all $t$) and constants $c, \epsilon>0$, consider $a_t x_{t+1}^{\top} \geq(1-\delta)\left\|a_t\right\|_2$ with $\delta \leq\left(\frac{c \epsilon}{\lambda_Q \lambda_K \lambda_V \lambda_O}\right)^2$. If $x_{\ell}$ satisfies $x_{\ell} W x_{\ell}^{\top} \geq c$ and $x_{\ell} W x_{\ell} \geq \epsilon^{-1} \max _{j \in[\ell], j \neq \ell} x_j W x_{\ell}^{\top}$, then
$$
\alpha_{t+1, l} \leq \frac{\exp \left(C_l \cdot \alpha_{t, l}+\eta\right)}{\exp \left(C_l \cdot \alpha_{t, l}+\eta\right)+\sum_{j \neq l} \exp \left(C_j \cdot \alpha_{t, j}-\eta\right)},
$$
$$
\alpha_{t+1, l} \geq \frac{\exp \left(C_l \cdot \alpha_{t, l}-\eta\right)}{\exp \left(C_l \cdot \alpha_{t, l}-\eta\right)+\sum_{j \neq l} \exp \left(C_j \cdot \alpha_{t, j}+\eta\right)},
$$
where
$$
C_j=\frac{x_j W x_j^T}{t\left|a_{t}\right|_2}, \text {and } \eta=\frac{(1+\sqrt{2}) \epsilon x_j W x_j^T}{(t+1)\left|a_{t}\right|_2}.
$$
\label{theorem: relation}
\end{theorem}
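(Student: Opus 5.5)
The plan follows the argument of \citet{liu2023scissorhands}, which Theorem \ref{theorem: relation} adapts. Write the pre-softmax logits at step $t+1$ as $z_{t+1,j} = \frac{1}{t+1}\, x_{t+1} W_Q W_K^{\top} x_j^{\top}$, so that $\alpha_{t+1,l} = \exp(z_{t+1,l}) / \sum_j \exp(z_{t+1,j})$. The aim is to show that each logit is, up to an additive error of at most $\eta$, equal to $C_j \alpha_{t,j}$. The two displayed bounds then follow by substituting these estimates into the softmax. Since $u \mapsto e^u/(e^u+S)$ is increasing in $u$ and decreasing in $S$, raising the numerator logit by $\eta$ and lowering the others by $\eta$ gives the upper bound, and the reverse substitution gives the lower bound.

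\textbf{Step 1: approximate $x_{t+1}$ by the normalized attention output.} Set $\hat a_t = a_t/\|a_t\|_2$. The hypothesis $a_t x_{t+1}^\top \ge (1-\delta)\|a_t\|_2$ together with $\|x_{t+1}\|_2 = 1$ gives
\[
\|x_{t+1} - \hat a_t\|_2^2 = 2 - 2\,\hat a_t x_{t+1}^\top \le 2\delta ,
\]
so $\|x_{t+1}-\hat a_t\|_2 \le \sqrt{2\delta}$. Hence, for each $j$,
\[
\bigl|x_{t+1} W_Q W_K^\top x_j^\top - \hat a_t W_Q W_K^\top x_j^\top\bigr| \le \sqrt{2\delta}\,\lambda_Q\lambda_K .
\]

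\textbf{Step 2: expand the main term.} Since $a_t = \sum_i \alpha_{t,i}\, x_i W_V W_O$, we have
\[
\hat a_t W_Q W_K^\top x_j^\top = \frac{1}{\|a_t\|_2}\sum_i \alpha_{t,i}\, x_i W x_j^\top .
\]
The diagonal term is $\alpha_{t,j}\, x_j W x_j^\top/\|a_t\|_2$, which after the $1/t$ or $1/(t+1)$ scaling is $C_j\alpha_{t,j}$. The hypothesis that $x_\ell W x_\ell^\top \ge \epsilon^{-1}\max_{j\ne \ell} x_j W x_\ell^\top$ bounds the off-diagonal contributions. Because $\sum_i\alpha_{t,i}=1$, they total at most $\epsilon\, x_j W x_j^\top/\|a_t\|_2$.

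\textbf{Step 3: collect the errors.} Using $\delta \le (c\epsilon/(\lambda_Q\lambda_K\lambda_V\lambda_O))^2$, $x_jWx_j^\top\ge c$, and $\|a_t\|_2 \le \lambda_V\lambda_O$, the Step 1 error $\sqrt{2\delta}\lambda_Q\lambda_K$ is at most $\sqrt 2\,\epsilon\, x_jWx_j^\top/\|a_t\|_2$. Adding the off-diagonal error of Step 2 gives the total $(1+\sqrt2)\epsilon\, x_jWx_j^\top/\|a_t\|_2$. After the $1/(t+1)$ scaling this is exactly $\eta$.

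\textbf{Main obstacle.} The delicate step is the bookkeeping in Steps 2 and 3, for two reasons. First, the separation hypothesis is stated for a single index $\ell$ (and with a typo, $x_\ell W x_\ell$), yet the argument needs the error bound for every logit $j$. I would assume the condition holds for each relevant $j$, as in the original lemma. Second, $C_j$ carries $1/t$ while $\eta$ carries $1/(t+1)$. I would absorb this mismatch either by reading both with the same normalization or by noting that $t/(t+1)\le1$ only shrinks the main term, and I would fold the resulting discrepancy into the stated bound.
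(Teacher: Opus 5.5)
Your proposal follows the paper's proof step for step. Step~1 is its Lemma~1, which you prove inline, applied with $x_1=a_t/\|a_t\|_2$, $x_2=x_{t+1}$, $y=W_QW_K^{\top}x_j^{\top}$. Step~2 is its Lemma~2, borrowed from Scissorhands. Step~3 is the same triangle-inequality bookkeeping using $\|a_t\|_2\le\lambda_V\lambda_O$ and the bound on $\delta$, and the softmax-monotonicity finish is identical. The paper also tacitly uses the per-index hypotheses you flag, since its softmax step applies the estimate to every competing logit $\gamma_{t+1,k}$. It needs them in the two-sided, all-indices form $|x_iWx_j^{\top}|\le\epsilon\,x_jWx_j^{\top}$ for $i\neq j$, as in its Lemma~2. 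The one-sided maximum over $j\in[\ell]$ in the statement only bounds the off-diagonal sum from above, so it cannot give the lower side of the logit estimates.

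Where your plan would go wrong is the fallback for the $1/t$ versus $1/(t+1)$ mismatch. The argument centres each logit at $\tfrac{t}{t+1}C_j\alpha_{t,j}$, which lies below $C_j\alpha_{t,j}$ for every $j$ because $x_jWx_j^{\top}\ge c>0$. In the upper bound this helps the numerator but hurts every competing logit in the denominator, and symmetrically in the lower bound. The gap $C_j\alpha_{t,j}/(t+1)$ is not controlled by $\eta$.

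In fact the bounds with $C_j$ normalized by $t$ can fail. Take orthonormal $x_1,\dots,x_t$, $W_VW_O=I$, $W_QW_K^{\top}$ diagonal in that basis with positive entries, and $x_{t+1}=a_t/\|a_t\|_2$. All hypotheses then hold with $\delta=0$ and any $\epsilon>0$, and the logits equal $\tfrac{t}{t+1}C_j\alpha_{t,j}$ exactly. Yet as $\epsilon\to 0$ both bounds collapse to the softmax of $(C_j\alpha_{t,j})_j$. This differs from the softmax of $(\tfrac{t}{t+1}C_j\alpha_{t,j})_j$ unless all $C_j\alpha_{t,j}$ coincide.

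So keep your first reading. What you actually prove is the theorem with $C_j=x_jWx_j^{\top}/\bigl((t+1)\|a_t\|_2\bigr)$ and an index-dependent $\eta_j$. The same holds for the paper, whose final displays carry $1/(t+1)$ throughout.
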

The proof can be found in Appendix \ref{sec: proof}. This theorem establishes the upper and lower bounds for the attention score at step $t+1$, which are determined by the attention scores at step $t$. In Transformer, the function mapping $a_t \to \log p(x_{t+1})$ is continuous, constraints on the dynamics of $a_t$ naturally propagate to the detection scores. Therefore, Theorem \ref{theorem: relation} can reveal two relationships in token-level detection scores:
\begin{itemize}[leftmargin=*]
    \item \textbf{Neighbor Similarity}, where the detection scores of adjacent tokens in a sequence exhibit statistically lower variance compared to tokens that are far apart. This is because Theorem \ref{theorem: relation} creates a positive feedback loop analogous to simulated annealing \citep{kirkpatrick1983optimization}, where high scores at the current step lead to high scores at the next, and vice versa. That is, the attention (also token-level detection scores) cannot shift fast between steps.
    \item \textbf{Initial Instability}, where the detection scores of initial tokens in a sequence are statistically unstable compared to the subsequent tokens (i.e., fluctuate greatly). This is because these bounds are closely tied to the current step $t$. When $t$ is small (early position), $\eta$ and $C$ are large, allowing for dramatic fluctuations in $a_{t}$. That is, the attention score (also token-level detection scores) is unstable in the initial generation process.
\end{itemize}

Given that analyzing the dynamics of full-scale LLMs is mathematically intractable, we follow a practical setup \cite{liu2023scissorhands} and start with a simplified model, which provides theoretical motivation for the neighbor similarity and initial instability phenomena. We then empirically verify these two findings.

First, to empirically validate the neighbor similarity property, we evaluated the distance (mean absolute difference) in detection scores across k hops (i.e., $\frac{1}{|S|}\frac{1}{N-K}\sum_{s\in S}\sum_{t=0}^{N-K}|score(s_t) - score(s_{t+k})|$, where $S$ is the text set, and $score(s_t)$ is provided in Table \ref{tab: unified_view}). As illustrated in Fig. \ref{fig:motivation_relation} (more results can be found in Appendix \ref{sec: more_relation}), there is a clear positive correlation between the detection score distance and the hop, and adjacent tokens have the highest detection score similarity; thereby providing empirical evidence for our theoretical finding.

Second, to validate the initial instability property, we analyzed the distance in detection score between adjacent tokens at different percentage positions (i.e., $\frac{1}{S}\sum_{s\in S}|score(s_t) - score(s_{t+1})|$). Fig. \ref{fig:motivation_position} illustrates that the score difference is substantially larger for tokens at the beginning of the text and progressively decreases, eventually stabilizing. More results can be found in Appendix \ref{sec: more_relation}. Considering our established finding on neighbor similarity (i.e., adjacent scores should be highly similar), a high detection score difference at the sequence beginning indicates a significant instability in detection scores of initial tokens.

\section{Markov-Informed Detection Score Calibration}
\label{sec: methodology}

Based on the revealed relationships, this section uses an MRF to capture them (Section \ref{sec: mrf}) and adopts the mean field approximation to model the MRF model as a lightweight component stacked on existing detectors to calibrate detection scores (Section \ref{sec: mean}), thereby enhancing detection.

\subsection{Markov Random Field for MGT Detection}
\label{sec: mrf}
We capture these two types of relationships by modeling the joint probability distribution of text's token detection scores through pairwise Markov Random Fields (pMRF). Specifically, for each token $s_t$ in text $s$, we assign a binary random variable $y_{s_{t}}$, where $y_{s_{t}}=0$ and $y_{s_{t}}=1$ indicate a human- or machine-generated token \footnote{Note that token labels are not absolute but depend on the context in which they appear. For example, "the" can be a human token or a machine token depending on the text.}, respectively, as measured by the detection score of the token. Let $y_{s}$ denote the label set for all tokens in text $s$, the pMRF over these tokens can be formalized as a Gibbs distribution:
$P(y_s)=\frac{1}{Z} \exp(-E(s,y_s))$,
where $Z$ is a normalizing constant and $E(s, y_s)$ is the energy function. Our objective is to maximize the posterior probability of the token labels $y_s$ by minimizing the global energy function $E(s,y_s)$. The energy function typically consists of two components: the unary potential $\Psi_U$ and the pairwise potential $\Psi_P$:
$$E(s,y_s)=\sum\nolimits_{t=1}^{N} \Psi_U\left(s_t, y_{s_t}\right)+\sum\nolimits_{t=1}^{N}\sum\nolimits_{s_j\in\mathcal{N}(s_t)} \Psi_P\left(y_{s_t}, y_{s_{j}}\right),$$
where $\mathcal{N}(s_t)$ denotes the adjacent tokens of token $s_t$, i.e., $\mathcal{N}(s_t)=\{s_{t-1},s_{t+1}\}$ (if existing).

\textbf{Unary potential $\Psi_U\left(s_t, y_{s_t}\right)$} quantifies the cost of assigning label $y_{s_t}$ to token $s_t$. We let $\Psi_U\left(s_t, y_{s_t}\right)=-\log p(s_t)$, where $p(s_t)$ is the output probability from the original detector. For detectors without probability output, it is measured by the 0-1 normalized score of token $s_t$.

\textbf{Pairwise potential $\Psi_P\left(y_{s_t}, y_{s_{j}}\right)$} models the similarity in detection scores between adjacent tokens. A penalty is applied if two adjacent tokens are assigned diﬀerent labels; otherwise, a reward is given. This enforces label smoothness and captures the neighbor similarity property:
\begin{equation}
\label{eq: pairwise_potential_temp}
    \Psi_P\left(y_{s_t}, y_{s_{j}}\right)=w\cdot(2\cdot I(y_{s_t}\ne y_{s_{j}})-1),
\end{equation}
where $I(\cdot)$ is the indicator function, and the reward and penalty factor $w\ge 0$. This implies an energy penalty of $w$ when adjacent tokens have different labels; otherwise, the reward is $-w$.

To model the initial instability property, we introduce a positional weighting function $\beta(t)$ in the binary potential. This function assigns lower weights to binary potentials at earlier positions, thereby mitigating the amplification of energy errors caused by unstable initial neighbor tokens. In this paper, we define the positional weighting function $\beta(t)$ as a Sigmoid function to ensure a smooth transition of weights, and the revised binary potential is then given by:
\begin{equation}
\label{eq: pairwise_potential}
    \Psi_P\left(y_{s_i}, y_{s_{j}}\right)=\beta(j)\cdot w\cdot(2\cdot I(y_{s_i}\ne y_{s_{j}})-1),\ \ \text{with}\ \ \beta(t)=\frac{1}{1+exp(-(t-t_0))},
\end{equation}
where $t_0$ is the weighted center, effectively suppressing the pairwise potential of tokens before $t_0$.

\subsection{Mean Field Approximate in MGT Detection}
\label{sec: mean}
Given the MRF model, this subsection details how to model it as a lightweight component stacked on the original detector through mean field approximation theory, thereby enhancing detection.

In the MRF posterior probability $P(y_s)=\frac{1}{Z} \exp(-E(s,y_s))$, the partition function $Z=\sum_{y_s}exp(-E(s,y_s))$, which is obtained by adding over all possible combinations of $y_s$. For a text with $N$ tokens, there are $2^N$ combinations, making exact computation of $P(y_s|s)$ infeasible. Inspired by existing work \citep{deng2022markov}, we employ mean-field theory for approximate inference. Its core idea is to use a simpler, factorized distribution $Q\left(y_s\right)=\prod_{t=1}^N Q_{s_t}\left(y_{s_t}\right)$ to approximate the true joint distribution $P(y_s)$, achieved by minimizing the KL divergence between these two distributions:
\begin{equation}
    \begin{aligned}
D(Q \| P) =& \mathbb{E}_{y_s \sim Q}\left[\log Q\left(y_s\right)\right]-\mathbb{E}_{y_s \sim Q}\left[\log P\left(y_s\right)\right]\\
=&\sum\nolimits_{t=1}^N \mathbb{E}_{y_{s_t} \sim Q_{s_t}}\left[\log Q_{s_t}\left(y_{s_t}\right)\right]+\mathbb{E}_{y_s \sim Q}\left[E\left(s,y_s\right)\right]
+\log Z.
\end{aligned}
\end{equation}
The first equation is the definition of KL divergence, and the second is obtained by subsituting $P(y_s)$ and $Q(y_s)$. Then, we define a Lagrangian composed of all terms involving $Q_{s_t}\left(y_{s_t}\right)$ in $D(Q \| P)$:
\begin{equation}
\label{eq: lagrange}
    L_{s_t}(Q) =Q_{s_t}\left(y_{s_t}\right) \log Q_{s_t}\left(y_{s_t}\right) +\mathbb{E}_{y_s \sim Q}\left[E\left(s,y_s\right)\right]+\lambda(\sum\nolimits_{y_{s_t}} Q_{s_t}\left(y_{s_t}\right)-1).
\end{equation}
Here, the term involving Lagrange multiplier $\lambda$ assures that $Q_{s_i}$ is a proper probability distribution. Now we take derivatives of Eq. (\ref{eq: lagrange}) with respect to $Q_{s_t}(y_{s_t})$ and set the corresponding derivative to 0, then we get the optimal $Q_{s_t}(y_{s_t})$:
\begin{equation}
\label{eq: solution_temp}
    Q_{s_t}(y_{s_t})=\frac{1}{z}exp(\Psi_U(s_t,y_{s_t})-\sum\nolimits_{s_j\in\mathcal{N}(s_t)}\mathbb{E}_{y_{s_j}\sim Q_{s_j}}\left[\Psi_P(y_{s_t},y_{s_j})\right]).
\end{equation}
We can express the single token calculation in Eq. (\ref{eq: solution_temp}) as a matrix form of multiple token calculations, which involves three steps:
\begin{itemize}[leftmargin=*]
    \item \textbf{Unary potential calculation}. Corresponding to the unary potential $\Psi_U(s_t,y_{s_t})$ of token $s_t$ being $-\log p(s_i)$, we can express the unary potential of text $s$ in matrix form $-\log H$, where the i-th row of $H$ corresponds to token $s_i$'s prior probability and the two columns correspond to identity labels (HGT and MGT), that is, $H = [1-p(s), p(s)]$, where $p(s) = [p(s_1), ..., p(s_N)]^{\top}$.
    \item \textbf{Pairwise potential calculation}. For the revised pairwise potential (i.e., Eq. (\ref{eq: pairwise_potential})), we can define the weighted adjacency matrix as $A^{corr}$, where $A^{corr}_{t,t+1}=\beta(t+1)$ for all $t=0,...,N-2$, $A^{corr}_{t-1,t}=\beta(t-1)$ for all $t=1,...,N-1$, and 0 otherwise. Then the matrix form of the weighted pairwise potential is $A^{corr}Q\left[\begin{array}{cc}-w & w \\ w & -w \end{array}\right]$. This update strategy enforces that all relationships receive the same reward or penalty. However, the influence of MGT neighbors and HGT neighbors is intuitively different, so this kind of reward and punishment mechanism with the same weight will limit the expressive ability of MRF. To this end, we relax the weights for different relationships and get the pairwise potential as $A^{corr}Q(W_{mrf}\odot\left[\begin{array}{cc}-1 & 1 \\ 1 & -1 \end{array}\right])$, where $W_{mrf}\in\mathbb{R}_{+}^{2\times2}$.
    \item \textbf{Normalization}. The operator $\frac{1}{z}exp(\cdot)$ in Eq. (\ref{eq: solution_temp}) can be naturally modeled as Softmax function.
\end{itemize}
In summary, we get the following update rule for tokens' detection scores:
\begin{equation}
\label{eq: final_temp}
    Q=\text{softmax}\left(\log H-A^{corr}Q\left(W_{mrf}\odot\left[\begin{array}{cc}-1 & 1 \\ 1 & -1 \end{array}\right]\right)\right).
\end{equation}

Notably, Eq. (\ref{eq: final_temp}) shows that the computation of $Q$ relies on $Q$ itself; hence, iterative computation is required. Initializing the initial $Q$ to $H$, we can get the iterative version as follows:
\begin{equation}
\label{eq: final}
    Q^t=\text{softmax}\left(\log Q^{t-1}-A^{corr}Q^{t-1}\left(W_{mrf}\odot\left[\begin{array}{cc}-1 & 1 \\ 1 & -1 \end{array}\right]\right)\right),\ \ \text{where}\ \ Q^0=H.
\end{equation}

\begin{wrapfigure}{r}{0.58\textwidth}
\vspace{-0.7cm}
\begin{minipage}{0.58\textwidth}
\begin{algorithm}[H]
\caption{Markov-informed Enhancement Framework}
\label{alg: framework}
\begin{algorithmic}[1]
\State {\bfseries Input:} Text $s$ to be detected, the original detector $f_1\circ f_2$, iteration steps $T$, MRF weights $W$.
\State Construct $A^{corr}$ based on $s$.
\State Get each token $s_i$'s detection score from the detection score calculation module $f_1$, and set $Q^0=H$.
\For{$t=0$ {\bfseries to} $T-1$}
\State Update $Q$ according to Eq. (\ref{eq: final}).
\EndFor
\State Calculate $Q_{final}$ according to Eq. (\ref{eq: refine_score}).
\State {\bfseries Return} detection score $f_2(Q_{final})$ of text $s$.
\end{algorithmic}
\end{algorithm}
\end{minipage}
\end{wrapfigure}
In addition to using position weight function $\beta(t)$ in pairwise potential to reduce the impact of initial unstable scores, we also use this position weight function on the final calibrated scores $Q^T$ of $T$ iterations to reduce their impact on detection:
\begin{equation}
\label{eq: refine_score}
    Q_{final}=[\beta(1),...,\beta(N)]\odot Q^T.
\end{equation}

The MRF-informed calibration component is then directly stacked on the original detector to calibrate the token detection scores. Specifically, if the detector is formalized as a combination of the detection score calculation module $f_1$ and the detection module $f_2$, i.e., $f(s)=f_1\circ f_2(s)$, the MRF-informed component of Eq. (\ref{eq: refine_score}) is defined as $f_{mrf}$, then the enhanced detector is $f_{enh.}(s)=f_1\circ f_{mrf}\circ f_{2}(s)$. The complete inference process is shown in Alg. \ref{alg: framework}. To learn weights $W$, we use supervised training: $\mathcal{L}=-\sum_{s\in\mathcal{D}_{train}}(Y_s\log f_{enh.}(s)+(1-Y_s)\log (1-f_{enh.}(s)))$.

\textbf{Computational Complexity}. The MRF layer can be computed using sparse-dense matrix multiplication. For a text containing $N$ tokens, the number of iterations is T, resulting in a computational complexity of $\mathcal{O}(NT)$, which is negligible for calculating the detection score. We will also empirically verify the efficiency of our method in Appendix \ref{sec: running}.
\section{Experiments}

\textbf{Dataset}. We conducted experiments on three widely used public datasets: Essay \citep{verma2024ghostbuster}, Reuters \citep{verma2024ghostbuster}, DetectRL \citep{wu2024detectrl}, and TruthfulQA \citep{lin2022truthfulqa}. The Essay and Reuters datasets collect machine text generated by GPT4All, ChatGPT, ChatGPT-turbo, ChatGLM, Dolly, and Claude. The TruthfulQA dataset collects machine text from GPT4, ChatGPT-turbo, ChatGLM, Dolly, ChatGPT, and StableLM. The DetectRL dataset not only includes pure machine-generated text by Google-PaLM, ChatGPT, and Llama-2-70b, but its unique mixed, paraphrased, and cross-domain texts also allow us to more comprehensively evaluate the model's performance in complex real-world scenarios. For a complete description of the datasets, please refer to Appendix \ref{sec: dataset}.

\textbf{Baselines}. We select the following metric-based methods for comparison and enhancement: Log-Likelihood (Likelihood) \citep{solaiman2019release}, Log-Rank \citep{gehrmann2019gltr}, Entropy \citep{gehrmann2019gltr}, DetectGPT \citep{mitchell2023detectgpt}, Fast-DetectGPT (FastGPT) \citep{bao2024fast}, DNA-GPT \citep{yangdna2024}, Repreguard \citep{chen2025repreguard}, Lastde \citep{xutraining}, FourierGPT \citep{xu2024detecting}, and Binoculars \cite{hans2024spotting}. The versions equipped with the proposed method are defined with 'M' suffix, e.g., Likelihood-M. Furthermore, although we focus on metric-based methods, we also compare with model-based methods ChatGPT-D \citep{guo2023close} and MPU \citep{tianmultiscale}. Their details can be found in Appendix \ref{sec: baseline}.

\textbf{Metrics}. First, as a binary classification problem, we use the area under the receiver operating characteristic curve (AUROC). Second, following \citep{tufts2024examination, fraser2025detecting, hans2024spotting}, we recognize the negative impact of misclassifying human text as machine-generated text. Therefore, another important evaluation metric is the true positive rate (TPR) at a low false positive rate (FPR). Specifically, we measure the TPR at an FPR of 1\%, denoting this as TPR@FPR-1\%.

\begin{table}[t]
    \scriptsize   \centering   \caption{Performance concerning AUROC (\%) on Essay (left) and DetectRL (right).}   \renewcommand\arraystretch{1.1}   \begin{adjustbox}{width=1.\textwidth}   \setlength{\tabcolsep}{0.008\linewidth}     \begin{tabular}{c|cccccc|c|ccc|c}     \toprule
    \multirow{2}[0]{*}{Method} & \multicolumn{7}{c|}{Essay (Training Text: GPT4All)}    & \multicolumn{4}{c}{DetectRL (Training Text: Llama-2-70b)} \\
    \noalign{\smallskip} \cline{2-12}\noalign{\smallskip} 
          & GPT4All  & ChatGPT  & Dolly  & ChatGLM  & Claude  & ChatGPT-turbo  & Avg.  & Llama-2-70b  & ChatGPT  & Google-PaLM  & Avg. \\
          \midrule
    Likelihood & $96.16_{\pm0.30}$ & $98.79_{\pm0.19}$ & $90.90_{\pm1.33}$ & $99.29_{\pm0.25}$ & $92.76_{\pm0.23}$ & $99.13_{\pm0.19}$ & 96.17 & $78.58_{\pm0.41}$ & $66.61_{\pm0.99}$ & $71.42_{\pm0.49}$ & 72.20 \\
    \rowcolor[rgb]{ .949,  .949,  .949} \textbf{Likelihood-M} & $\textbf{98.58}_{\pm0.14}$ & $\textbf{99.47}_{\pm0.11}$ & $\textbf{94.59}_{\pm0.96}$ & $\textbf{99.54}_{\pm0.18}$ & $\textbf{94.82}_{\pm0.36}$ & $\textbf{99.72}_{\pm0.13}$ & \textbf{97.79} & $\textbf{87.61}_{\pm0.48}$ & $\textbf{73.70}_{\pm0.58}$ & $\textbf{81.21}_{\pm1.21}$ & \textbf{80.84} \\
    Log-Rank & $96.55_{\pm0.31}$ & $98.95_{\pm0.13}$ & $90.08_{\pm1.28}$ & $99.36_{\pm0.13}$ & $92.01_{\pm0.20}$ & $99.24_{\pm0.15}$ & 96.03 & $79.67_{\pm0.46}$ & $65.85_{\pm0.94}$ & $70.66_{\pm0.40}$ & 72.06 \\
    \rowcolor[rgb]{ .949,  .949,  .949} \textbf{Log-Rank-M} & $\textbf{98.57}_{\pm0.06}$ & $\textbf{99.41}_{\pm0.09}$ & $\textbf{93.82}_{\pm0.91}$ & $\textbf{99.55}_{\pm0.08}$ & $\textbf{92.91}_{\pm0.30}$ & $\textbf{99.64}_{\pm0.10}$ & \textbf{97.32} & $\textbf{90.20}_{\pm0.44}$ & $\textbf{75.32}_{\pm0.91}$ & $\textbf{84.34}_{\pm0.61}$ & \textbf{83.29} \\
    Entropy & $74.19_{\pm1.62}$ & $89.49_{\pm0.34}$ & $73.26_{\pm1.48}$ & $84.11_{\pm0.77}$ & $86.58_{\pm0.66}$ & $95.94_{\pm0.35}$ & 83.93 & $66.21_{\pm0.88}$ & $63.09_{\pm1.05}$ & $60.73_{\pm0.91}$ & 63.34 \\
    \rowcolor[rgb]{ .949,  .949,  .949} \textbf{Entropy-M} & $\textbf{83.52}_{\pm0.73}$ & $\textbf{93.28}_{\pm0.15}$ & $\textbf{81.11}_{\pm0.91}$ & $\textbf{91.44}_{\pm0.35}$ & $\textbf{87.96}_{\pm0.48}$ & $\textbf{96.75}_{\pm0.17}$ & \textbf{89.01} & $\textbf{69.97}_{\pm0.90}$ & $\textbf{65.26}_{\pm1.36}$ & $\textbf{65.82}_{\pm0.96}$ & \textbf{67.02} \\
    DetectGPT & $50.81_{\pm0.58}$ & $46.40_{\pm0.77}$ & $57.48_{\pm0.84}$ & $50.41_{\pm1.70}$ & $41.54_{\pm0.60}$ & $17.90_{\pm1.25}$ & 44.09 & $52.37_{\pm0.59}$ & $50.22_{\pm0.60}$ & $43.19_{\pm1.41}$ & 48.60 \\
    \rowcolor[rgb]{ .949,  .949,  .949} \textbf{DetectGPT-M} & $\textbf{95.37}_{\pm2.42}$ & $\textbf{96.20}_{\pm2.48}$ & $\textbf{85.39}_{\pm7.02}$ & $\textbf{95.97}_{\pm2.93}$ & $\textbf{80.29}_{\pm10.43}$ & $\textbf{98.49}_{\pm0.78}$ & \textbf{91.95} & $\textbf{78.81}_{\pm3.68}$ & $\textbf{64.15}_{\pm4.48}$ & $\textbf{75.90}_{\pm4.53}$ & \textbf{72.95} \\
    FastGPT & $64.63_{\pm1.53}$ & $67.68_{\pm1.70}$ & $47.17_{\pm1.53}$ & $71.08_{\pm1.51}$ & $\textbf{75.31}_{\pm0.90}$ & $\textbf{88.62}_{\pm0.67}$ & 69.08 & $\textbf{67.72}_{\pm1.02}$ & $\textbf{58.50}_{\pm1.22}$ & $56.68_{\pm1.21}$ & 60.97 \\
    \rowcolor[rgb]{ .949,  .949,  .949} \textbf{FastGPT-M} & $\textbf{87.22}_{\pm3.40}$ & $\textbf{91.56}_{\pm3.33}$ & $\textbf{82.61}_{\pm17.98}$ & $\textbf{95.36}_{\pm0.48}$ & $59.29_{\pm1.89}$ & $63.48_{\pm18.13}$ & \textbf{79.92} & $66.55_{\pm3.36}$ & $55.19_{\pm3.26}$ & $\textbf{63.30}_{\pm4.37}$ & \textbf{61.68} \\
    DNA-GPT & $98.08_{\pm0.23}$ & $96.56_{\pm0.28}$ & $92.78_{\pm0.68}$ & $98.00_{\pm0.13}$ & $89.92_{\pm0.29}$ & $96.22_{\pm0.27}$ & 95.26 & $71.91_{\pm0.91}$ & $64.14_{\pm0.91}$ & $67.32_{\pm0.94}$ & 67.79 \\
    \rowcolor[rgb]{ .949,  .949,  .949} \textbf{DNA-GPT-M} & $\textbf{99.68}_{\pm0.07}$ & $\textbf{98.88}_{\pm0.06}$ & $\textbf{97.04}_{\pm0.49}$ & $\textbf{99.26}_{\pm0.05}$ & $\textbf{94.73}_{\pm0.28}$ & $\textbf{98.85}_{\pm0.09}$ & \textbf{98.07} & $\textbf{74.39}_{\pm1.02}$ & $\textbf{64.36}_{\pm1.08}$ & $\textbf{70.50}_{\pm1.03}$ & \textbf{69.75} \\
    \bottomrule
    \end{tabular}%
    \end{adjustbox}
  \label{tab: main_auc}%
\end{table}%

\subsection{Performance Comparison}

In this section, we evaluate the enhancement effectiveness of the proposed method in various real-world scenarios, including cross-LLM, cross-domain, detecting mixed machine text, and resisting paraphrase attacks. Details of these scenarios are provided in Appendix \ref{sec: scenario}.

\textbf{Performance across Different LLMs}. Table \ref{tab: main_auc} reports AUROC for detectors trained on GPT4All (Essay) and Llama‑2‑70b (DetectRL) and evaluated on texts from various LLMs. TPR@FPR-1\% results are provided in Table \ref{tab: main_tpr} in the Appendix. Besides, results of detectors trained on other LLM texts and on the Reuters dataset appear in Tables \ref{tab: essay_chatgpt}–\ref{tab: reuters_claude} of the Appendix. The proposed method yields significant gains for nearly all baselines. For example, on Essay, it raises Likelihood from 52.4\% to 77.86\% (+25.46\%). Encouragingly, our method also benefits weak detectors: DetectGPT significantly improves from 0.15\% to 37.18\% (+37.03\%), suggesting that the assumptions underlying its scoring are reasonable and that underperformance mainly stemmed from score estimation error. Finally, detection performance on texts from the same LLM is not always superior to cross‑LLM. For example, on Essay, performance on ChatGLM is particularly strong and even exceeds the intra‑LLM case of GPT4All, possibly because ChatGLM texts are more discriminative.

\textbf{Performance across Different Domains}. We evaluate cross-domain performance in four high-risk domains: arXiv, Writing Prompts, XSum, and Yelp Reviews. Results are summarized in Fig. \ref{fig: cross0}; additional enhanced results for more detectors under cross-domain settings are provided in Fig. \ref{fig: cross1} and Fig. \ref{fig: cross2} of the Appendix. In most settings, detectors equipped with our strategy show substantially stronger cross-domain generalization. We attribute this to calibrating the detection score with only a few carefully designed parameters (a 2x2 reward–penalty coefficient matrix), which helps prevent overfitting and thereby improves out-of-domain detection.

\begin{figure*}[t]
	\centering
	\includegraphics[width=1.\linewidth]{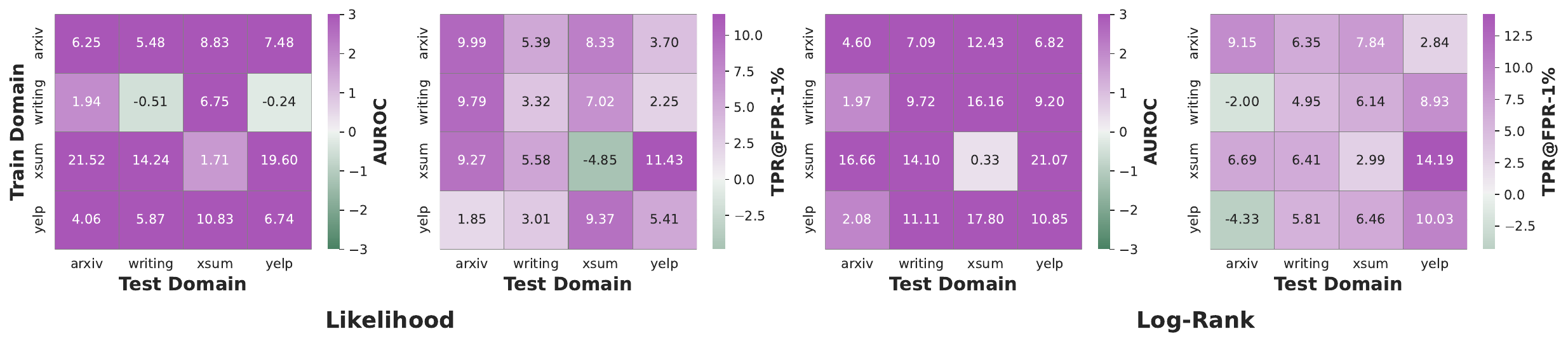}
    \vspace{-0.4cm}
	\caption{The performance improvement of the proposed method on Likelihood and Log-Rank. Values greater than 0 indicate an enhanced effect.}
	\label{fig: cross0}
\end{figure*}
\begin{figure*}[t]
	\centering
	\includegraphics[width=1.\linewidth]{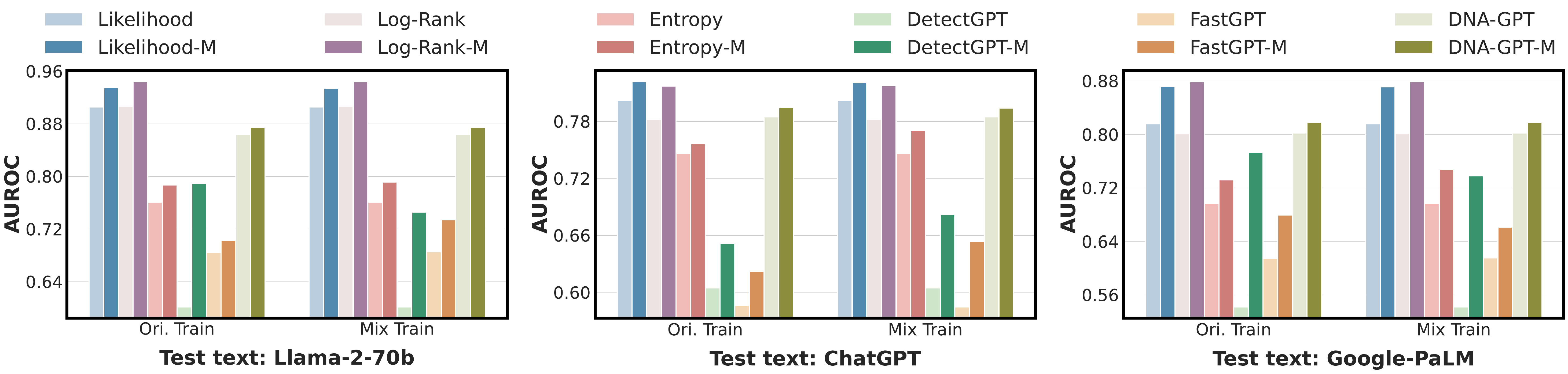}
	\vspace{-0.4cm}
    \caption{Detection performance concerning AUROC under different LLM mixed texts. All detectors are trained on Llama-2-70b texts.}
	\label{fig: mix_auc}
\end{figure*}

\begin{figure*}[t]
	\centering
	\includegraphics[width=1.\linewidth]{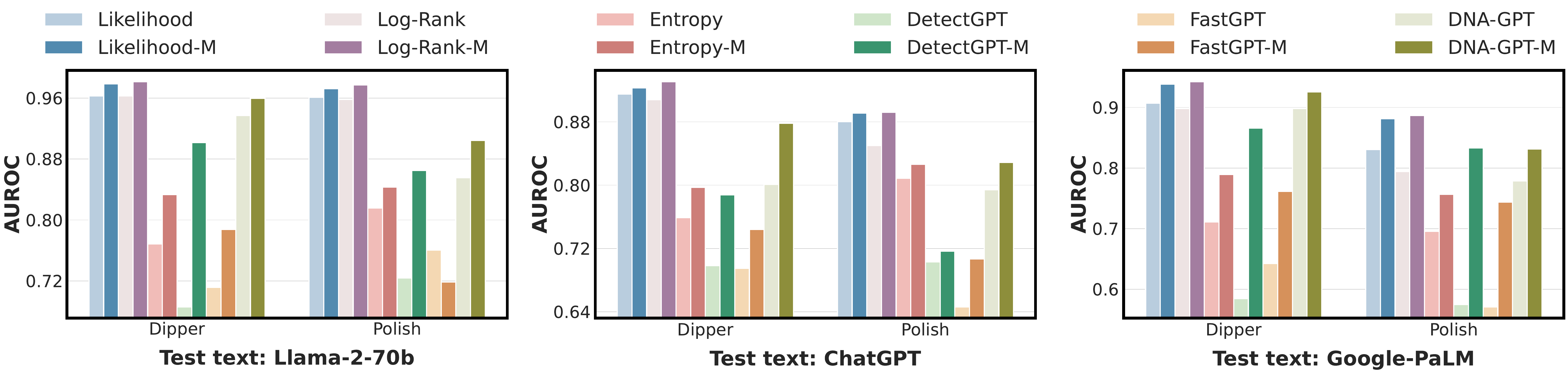}
	\vspace{-0.4cm}
    \caption{Detection performance concerning AUROC under different paraphrasing texts (Dipper and Polish). All detectors are trained on Llama-2-70b texts.}
	\label{fig: robustness_auc}
\end{figure*}

\textbf{Performance against Mixed Texts}. In practice, human–AI collaboration is pervasive, leading to widespread mixed human–machine text. We therefore evaluate the proposed strategy for mixed-text detection. Two training strategies are considered: training on the original text and training on the mixed text. Results concerning AUROC are shown in Fig. \ref{fig: mix_auc}, with TPR@FPR-1\%  shown in Fig. \ref{fig: mix_tpr} of the Appendix. In most settings, the proposed strategy enhances the detector's ability to recognize mixed-text. Moreover, comparing training on original versus mixed text shows that simply training on mixed text does not improve the detection ability of mixed text, highlighting the focus of mixed text detection research.

\textbf{Performance against Paraphrasing Attacks}. Prior work \citep{sadasivan2023can} shows that MGT detection is typically vulnerable to paraphrasing attacks, where an adversary rewrites a passage without altering its semantics to evade detection. We therefore assess the robustness gains of our framework on the Dipper and Polish paraphrase attacks provided by DetectRL. The results in Fig. \ref{fig: robustness_auc}  and Fig. \ref{fig: robustness_tpr} clearly indicate that, even in adversarial settings, our strategy yields encouraging improvements in robustness against paraphrasing attacks.

\subsection{Ablation Study}
We introduce an MRF layer and a positional weighting function to model neighbor similarity and initial instability, respectively. In this section, we verify their effectiveness through ablation studies, denoted as "w/o MRF" and "w/o Pos". Results on the Essay dataset are shown in Fig. \ref{fig: improve_Essay}, with additional results in Fig. \ref{fig: improve_DetectRL} and Fig. \ref{fig: improve_Reuters} of the Appendix. We can find that removing either component significantly drops performance, while retaining only one still outperforms the baseline detector, underscoring their designs' rationality. Besides, the contribution of each component varies by detector type. For single-text detectors like Likelihood and Log-Rank, the positional weighting function provides the most improvement by addressing instability in initial scores. However, for methods that aggregate multiple text scores, such as DetectGPT and FastGPT, this instability is already partially mitigated, making the MRF-based score calibration the primary source of gains.

\subsection{MRF vs. Neural Network}
This paper proposes a Markov-informed calibration method to model the relationship of detection scores of context tokens. To highlight the rationale of this design, we compare it with methods that directly use neural networks to calibrate scores. A simple three-layer neural network is used here, defined with the "nn" suffix. The comparison results on Likelihood are shown in Fig. \ref{fig: nn_loss}, and more results can be found in Fig. \ref{fig: nn_Log_Rank_detail} and Fig. \ref{fig: nn_Entropy_detail} of the Appendix. Although NN-based methods exhibit competitive performance in some settings for intra-LLMs, their generalization ability on cross-LLMs drops significantly. This suggests that it does not truly learn the general ability to correct scores, but rather overfits the training data. In contrast, our strategy shows good generalization.

\begin{figure*}[t]
	\centering
	\includegraphics[width=1.\linewidth]{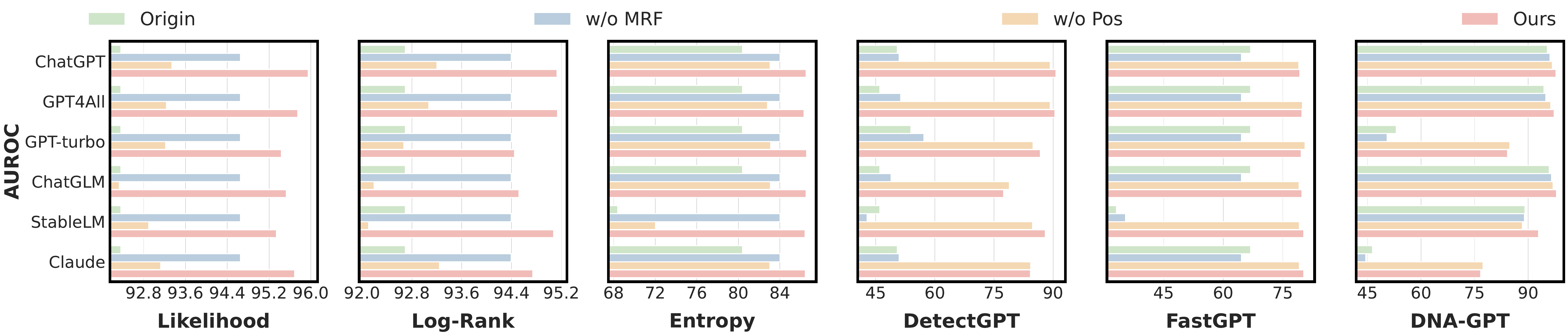}
    \vspace{-0.6cm}
	\caption{Ablation results on the Essay dataset. The y-axis represents the LLM text on which the detector was trained, and the x-axis represents the average performance across LLMs.}
	\label{fig: improve_Essay}
\end{figure*}

\begin{figure*}[t]
	\centering
	\includegraphics[width=1.\linewidth]{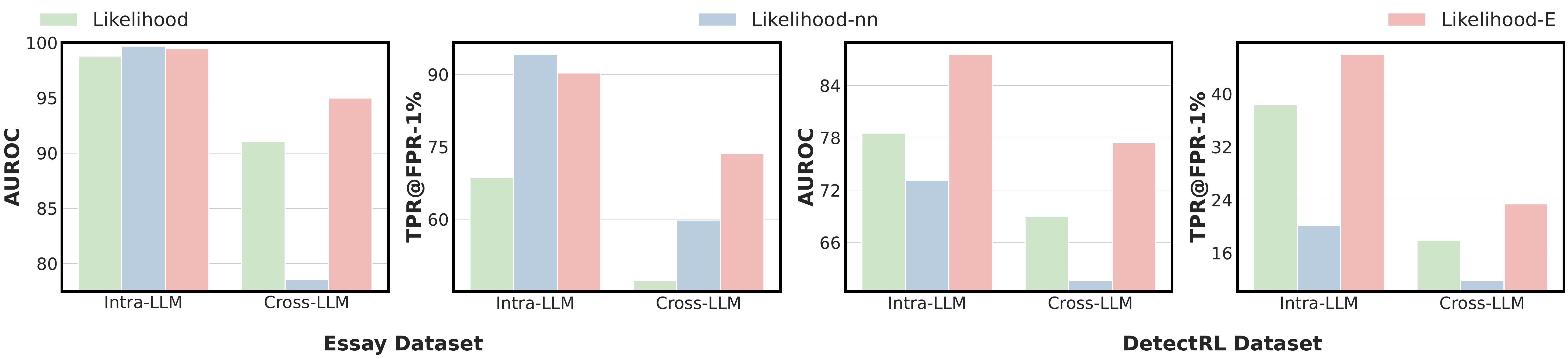}
    \vspace{-0.6cm}
	\caption{Comparison with NN-based calibration methods. The detector used is Likelihood.}
	\label{fig: nn_loss}
\end{figure*}
\section{Conclusion}
\label{sec: conclusion}

This paper has systematically examined representative metric-based detectors within a unified framework, revealing a core challenge: the inherent randomness of the LLM generation process leads to inaccurate detection scores, and naive aggregation of existing methods fails to fix this. Therefore, we have theoretically and empirically established two key properties of these scores: neighbor similarity and initial instability. Building on these insights, we have proposed a Markov-informed score calibration method that captures token relationships and corrects the biased scores produced by base detectors. Extensive experiments show substantial and consistent performance gains of the proposed method. 
Admittedly, the proposed method relies on modeling the relationships between context tokens to calibrate detection scores. Consequently, our method is not directly applicable to detection methods that do not provide this fine-grained, token-level output. 

\section*{Ethics Statement}
This paper proposes a Markov-informed calibration strategy to enhance machine-generated text detection and mitigate the potential risks posed by machine-generated text, including disinformation and phishing. Our work does not involve ethical issues such as dataset releases, potentially harmful insights, potential conflicts of interest and sponsorship, discrimination/bias/fairness concerns, privacy and security issues, legal compliance, and research integrity issues.

\section*{Reproducibility Statement}
Our code is available at \url{https://github.com/tmlr-group/MRF_Calibration}, and all datasets used in our experiments (Essay, Reuters, and DetectRL) are publicly available for download. In addition, we provide detailed implementation details in the Appendix, including data partitioning, the fixed seeds, the learning rate, the training batch size, and the two parameters $t_0$ and $T$ introduced by the proposed strategy, to ensure reproducibility of our work.

\clearpage

\bibliography{iclr2025_conference}
\bibliographystyle{iclr2025_conference}

\newpage
\appendix
\renewcommand{\contentsname}{Appendix}
\etocdepthtag.toc{mtappendix}
\etocsettagdepth{mtchapter}{none}
\etocsettagdepth{mtappendix}{subsection}
\tableofcontents
\newpage
\section{More Discussion of the Proposed Method}
\label{sec: more_discussion}
\subsection{Contribution}
The contributions of this paper are multifaceted.
\begin{itemize}[leftmargin=*]
    \item \textbf{We provide a unified perspective for understanding metric-based detection methods}. The diversity of these methods (e.g., using metrics based on log-likelihood or log-rank, and introducing perturbed or regenerated text) makes comparison difficult. To this end, we re-examine these methods from three perspectives: data, score calculation, and detection. This analysis provides a precise definition for each method, facilitating comparison and potential improvements. Our analysis shows that these methods employ more reasonable evaluation metrics and incorporate additional contextual information to enhance detection. However, they all fail to address the underlying token-level errors caused by inherent randomness, which limits their detection potential. Therefore, our unified analysis encourages a more nuanced characterization of token-level detection scores, which will provide guidance for future improvements.
    \item \textbf{We reveal the relationships between contextual token detection scores}. While the generative mechanisms of LLMs introduce dependencies between tokens, these relationships remain unclear. To this end, we theoretically reveal two relationships between contextual token scores: neighboring similarity and initial instability, which are further validated through empirical experiments. Constraining these relationships during detection has the potential to mitigate the imprecision in score calculation caused by the inherent randomness of MGTs, which is crucial for the field of MGT detection.
    \item \textbf{We propose a Markov-informed score calibration method to enhance MGT detection}. This involves using Markov random fields to capture the revealed relationships and, through mean-field approximation, modeling the MRF model as a lightweight component that can be stacked on existing detectors to further unlock detection potential. It is worth noting that our main technical contribution and innovation lies not only in the specific implementation but also in the conceptual token-level score calibration. While the current implementation is based on Markov random fields, this is only one possible approach; alternatives include using sequence models or graph neural networks. This conceptual insight can inspire improving MGT detection.
    \item \textbf{Extensive experiments consistently demonstrate the enhanced effectiveness of the proposed strategy}. We empirically verify that it not only excels on a single task but also demonstrates strong capabilities in multiple complex and challenging real-world scenarios, including generalization across LLMs and domains, and robustness to mixed text and paraphrased text. Furthermore, the proposed enhancement component incurs negligible computational overhead compared to the original detector. This combination of effectiveness and efficiency provides a solid foundation for developing practically deployable enhancement solutions for AI-generated text detection.
\end{itemize}

\subsection{{Differences from Existing Methods}}
{Although some works, such as FourierGPT \cite{xu2024detecting} and Lastde \cite{xutraining}, also learn local patterns, our approach is quite different:
\begin{itemize}[leftmargin=*]
    \item \textbf{Methodology}. Unlike FourierGPT or Lastde, which propose new standalone metrics, our method is a universal plug-in. We do not replace the metric; we calibrate the intermediate token-level scores of any existing metric (Likelihood, DetectGPT, etc.) using a Markov Random Field.
    \item \textbf{Theory vs. Empiricism}. While many existing local-pattern methods are empirically motivated (e.g., observing spectral power differences and token probability fluctuation), our approach is grounded in the theoretical derivation of attention dynamics (Theorem 1), which formally reveals the Neighbor Similarity and Initial Instability properties we model.
\end{itemize}
}


\section{Related Work}
\label{sec: related}

Existing detection methods can be categorized into active watermark-based methods and passive model-based and metric-based methods.

\subsection{Watermark-based Detection}

Watermarking is a proactive defense technique that embeds verifiable information during the text generation stage, thereby enabling simple and reliable detection. RedList \citep{kirchenbauer2023watermark} is a model-agnostic watermarking method that dynamically partitions the vocabulary into a “greenlist” and “redlist” based on preceding context, slightly increasing the probability of sampling tokens from the greenlist. Subsequent works have made various improvements to this approach. For instance, SemStamp \citep{hou2024semstamp} introduces a sentence-level semantic hashing watermark to enhance robustness against paraphrasing attacks; DiPmark \citep{wu2023dipmark} designs an unbiased watermark that does not alter the original output distribution. REMARK-LLM \citep{zhang2024remark} is a training-based watermarking method that employs a message encoding module to generate an encrypted token distribution for watermark embedding prior to inference. Beyond manually designed watermarks, directly leveraging language models to learn to generate watermarked text is also promising, including training student models \citep{gu2023learnability} and semantically invariant watermarking models \citep{liuadaptive}. In addition to the standard binary (0/1 bit) detection of AI-generated content, researchers have also explored multi-bit watermarks \citep{yoo2024advancing} for embedding more information.

\subsection{Model-based Detection}
Model-based methods represent a classical paradigm in detection, training a binary classifier on a dataset containing both human- and machine-generated texts. A series of works, such as OpenAI Detector \citep{solaiman2019release}, ChatGPT Detector \citep{guo2023close}, GPTZero \citep{gptzero}, and G3 Detector \citep{zhan2023g3detector}, collect texts generated by various LLMs to train a unified classifier. GPT-Pat \citep{yu2023gpt} finds that detectors trained solely on a single decoding strategy generalize poorly and enhances performance by utilizing mixed decoding strategies. In addition to original data, GLTR \citep{gehrmann2019gltr} trains a simple logistic regression classifier by analyzing the predicted ranking of each word within its context. SeqXGPT \citep{wang2023seqxgpt} treats the sequence of logits as waveform signals for detection, while Sniffer \citep{li2023origin} uses the difference in logits from different models on the same text as features for detection and attribution. Beyond the data level, recent works have explored more advanced training strategies. For example, CoCo \citep{liu2022coco} introduces graph structures and contrastive learning; LLMDet \citep{wu2023llmdet} leverages the perplexity of surrogate models as additional features; MPU \citep{tianmultiscale} adopts a positive-unlabeled learning paradigm; and RADAR \citep{hu2023radar} incorporates adversarial training to enhance model robustness. The above methods generally assume a known text source, but when it is unknown, Ghostbuster \citep{verma2024ghostbuster} proposes training classifiers directly using texts generated from known surrogate models.

\subsection{Metric-based Detection}
Metric-based methods do not require training on specific datasets; instead, they directly leverage the inherent statistical biases or intrinsic properties of language model-generated text for distinction. The main advantage of such methods lies in their stronger generalization to new models and domains. Classic approaches in this category include the use of Log-Likelihood \citep{solaiman2019release}, Log-Rank \citep{mitchell2023detectgpt}, and Entropy \citep{gehrmann2019gltr}. DetectGPT \citep{mitchell2023detectgpt} finds that AI-generated text typically lies in regions of negative curvature with respect to the model's log-probability function. By perturbing the text and observing changes in log-probability, it can effectively distinguish AI-generated text. Inspired by DetectGPT, Fast-DetectGPT \citep{bao2024fast} replaces log-probability with conditional probability curvature, significantly improving detection efficiency while maintaining performance. DetectLLM-LRR \citep{su2023detectllm} proposes using the ratio of log-likelihood to log-rank for detection. Some works, such as DNA-GPT \citep{yangdna2024} and DetectGPT4Code \citep{yang2023zero}, detect AI-generated text by comparing discrepancies between the original text and continuations generated by a surrogate model. PHD \citep{tulchinskii2024intrinsic} observes that genuine human-written text possesses higher intrinsic dimensionality after encoder mapping. SimLLM \citep{nguyen2024simllm} is based on the observation that the similarity between the original text and its generated continuation is significantly higher than that between generated text and its re-generated version; thus, it estimates the similarity between an input sentence and its generated counterpart for detection. Given that existing methods struggle with out-of-distribution data, token coherence \citep{ma2024zero} has been shown to be a reliable metric since LLM-generated text usually exhibits higher token coherence than human-written text. Yu et al. \citep{yu2024text} capture the intrinsic features of text by identifying layers with the greatest distributional differences when projecting into the vocabulary space, and using intrinsic rather than semantic features for detection has been demonstrated to yield better results. {RepreGuard \citep{chen2025repreguard} extracted unique activation features from the MGT using a surrogate model, and then the projection scores along the direction of these features are more discriminative. Lastde \citep{xutraining} introduced time series analysis to LLM-generated text detection, capturing the temporal dynamics of token probability sequences. FourierGPT \citep{xu2024detecting} offered a new perspective on human and model text detection tasks by using relative rather than absolute probability values and by extracting useful features from the probability spectrum. MoSEs \citep{wu2025moses} achieved the quantification of uncertainty in style perception through conditional threshold estimation.}

\section{Proof of Theorem \ref{theorem: relation}}
\label{sec: proof}

\begin{theorem*}\ref{theorem: relation}
Let $\lambda_K, \lambda_Q, \lambda_V, \lambda_O$ be the largest singular values of parameters $W_K, W_Q, W_V, W_O$, respectively, and let $W=W_V W_O W_Q W_K^{\top}$. For the transformer defined in Eq. (\ref{eq: simplified_model}), assuming normalized inputs ($\left\|x_t\right\|_2=1$ for all $t$) and constants $c, \epsilon>0$, consider $a_t x_{t+1}^{\top} \geq(1-\delta)\left\|a_t\right\|_2$ with $\delta \leq\left(\frac{c \epsilon}{\lambda_Q \lambda_K \lambda_V \lambda_O}\right)^2$. If $x_{\ell}$ satisfies $x_{\ell} W x_{\ell}^{\top} \geq c$ and $x_{\ell} W x_{\ell} \geq \epsilon^{-1} \max _{j \in[\ell], j \neq \ell} x_j W x_{\ell}^{\top}$, then
$$
\alpha_{t+1, l} \leq \frac{\exp \left(C_l \cdot \alpha_{t, l}+\eta\right)}{\exp \left(C_l \cdot \alpha_{t, l}+\eta\right)+\sum_{j \neq l} \exp \left(C_j \cdot \alpha_{t, j}-\eta\right)},
$$
$$
\alpha_{t+1, l} \geq \frac{\exp \left(C_l \cdot \alpha_{t, l}-\eta\right)}{\exp \left(C_l \cdot \alpha_{t, l}-\eta\right)+\sum_{j \neq l} \exp \left(C_j \cdot \alpha_{t, j}+\eta\right)},
$$
where
$$
C_j=\frac{x_j W x_j^T}{t\left|a_{t}\right|_2}, \text {and } \eta=\frac{(1+\sqrt{2}) \epsilon x_j W x_j^T}{(t+1)\left|a_{t}\right|_2}.
$$
\end{theorem*}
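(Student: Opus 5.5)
Following the Scissorhands analysis \citep{liu2023scissorhands}, which this statement adapts, the plan is to control the step-$(t+1)$ attention logits by the step-$t$ attention weights. By Eq. (\ref{eq: simplified_model}), the logit of position $j$ at step $t+1$ is $\frac{1}{t+1}x_{t+1}W_QW_K^\top x_j^\top$. The idea is to replace $x_{t+1}$ by the normalized attention output $a_t/\|a_t\|_2$, which the hypothesis $a_tx_{t+1}^\top\ge(1-\delta)\|a_t\|_2$ makes legitimate. Write $\hat a_t=a_t/\|a_t\|_2$. Since both $x_{t+1}$ and $\hat a_t$ are unit vectors, $\|x_{t+1}-\hat a_t\|_2^2=2-2\hat a_tx_{t+1}^\top\le 2\delta$. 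Hence $\|x_{t+1}-\hat a_t\|_2\le\sqrt{2\delta}$.

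\textbf{Step 1 (main term).} Expand $a_t=\sum_i\alpha_{t,i}x_iW_VW_O$. The step-$(t+1)$ logit at $j$ then equals
$$\frac{1}{(t+1)\|a_t\|_2}\sum_i\alpha_{t,i}\,x_iWx_j^\top+\frac{1}{t+1}(x_{t+1}-\hat a_t)W_QW_K^\top x_j^\top.$$
The sum separates into a diagonal part $\alpha_{t,j}x_jWx_j^\top$ and off-diagonal cross terms $\sum_{i\ne j}\alpha_{t,i}x_iWx_j^\top$. The diagonal part yields the $C_j\alpha_{t,j}$ term, with the $1/t$ versus $1/(t+1)$ normalization absorbed into the constants as in the statement.

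\textbf{Step 2 (error terms).} I would bound the two error pieces separately.
\begin{itemize}
\item \emph{Cross terms.} The dominance assumption gives $x_iWx_\ell^\top\le\epsilon\,x_\ell Wx_\ell^\top$ for $i\ne\ell$. Since $\sum_i\alpha_{t,i}\le1$, the cross terms total at most $\epsilon\,x_\ell Wx_\ell^\top/((t+1)\|a_t\|_2)$. This contributes the ``$1$'' in $(1+\sqrt2)$.
\item \emph{Perturbation term.} By Cauchy--Schwarz with unit $x_j$, it is at most $\sqrt{2\delta}\,\lambda_Q\lambda_K/(t+1)$. The choice $\delta\le(c\epsilon/(\lambda_Q\lambda_K\lambda_V\lambda_O))^2$ gives $\sqrt{\delta}\,\lambda_Q\lambda_K\le c\epsilon/(\lambda_V\lambda_O)$. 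Using $\|a_t\|_2\le\lambda_V\lambda_O$ (a convex combination of unit vectors pushed through $W_VW_O$) and $c\le x_\ell Wx_\ell^\top$, this becomes at most $\sqrt2\,\epsilon\,x_\ell Wx_\ell^\top/((t+1)\|a_t\|_2)$, which contributes the $\sqrt2$.
\end{itemize}
Together these show that each logit lies in $[C_j\alpha_{t,j}-\eta,\;C_j\alpha_{t,j}+\eta]$, with $\eta$ as stated, read as the bound built from the dominant token $\ell$.

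\textbf{Step 3 (softmax sandwich).} The map from logits to $\alpha_{t+1,\ell}$ is increasing in the $\ell$-th logit and decreasing in every other logit. Two substitutions therefore give the two inequalities. Raising the $\ell$-th logit to its upper value and lowering the others to their lower values gives the upper bound. Doing the reverse gives the lower bound.

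\textbf{Main obstacle.} I expect Step 2 to be the hardest part. The hypotheses are stated only for the distinguished token $\ell$, yet $\eta$ must uniformly bound the errors for every $j$ appearing in the denominator. I would first try to handle this by interpreting $\eta$ as defined with $x_\ell Wx_\ell^\top$, and by assuming the dominance condition, or a comparable bound $|x_iWx_j^\top|\le\epsilon\,x_\ell Wx_\ell^\top$, for all relevant $j$. This is effectively the level of rigor of the Scissorhands lemma. I would also need to check the sign conventions, since negative $x_iWx_j^\top$ still needs absolute-value control, and to check the $t$ versus $t+1$ scaling in $C_j$.
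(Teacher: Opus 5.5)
Your proposal follows the paper's proof step for step. The paper compares $x_{t+1}$ with $a_t/\|a_t\|_2$ via its Lemma~1, which is your bound $\|x_{t+1}-\hat a_t\|_2\le\sqrt{2\delta}$. It converts $\sqrt{2\delta}\,\lambda_Q\lambda_K/(t+1)$ into $\sqrt2\,\epsilon\, x_\ell W x_\ell^\top/((t+1)\|a_t\|_2)$ using $\|a_t\|_2\le\lambda_V\lambda_O$ and $c\le x_\ell W x_\ell^\top$. It bounds the cross terms by dominance (its Lemma~2, stated with $|x_jWx_\ell^\top|$). It then adds the two bounds to get $(1+\sqrt2)\epsilon$ and finishes with the same softmax monotonicity sandwich.

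The caveats you flag are present in the paper's proof too. The paper applies the per-token bound to every $k$ in the denominator without hypotheses at those $k$. Its derivation also produces $t+1$ rather than $t$ in $C_j$, and that factor cannot really be absorbed. So your argument, like the paper's, establishes the bound with $C_j=x_jWx_j^\top/((t+1)\|a_t\|_2)$.
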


\begin{proof}
Our proof follows the proof of existing work \citep{liu2023scissorhands}. First, we introduce two necessary lemmas to help our proof.
\begin{lemma}[\citep{liu2023scissorhands}]
\label{theorem: lemma1}
Let $x_1, x_2 \in \mathbb{R}^{1 \times m}$ satisfies $\left\|x_1\right\|_2=\left\|x_2\right\|_2=1$ and $x_1 x_2^{\top} \geq 1-\delta$ for some $\delta \in(0,1)$. Then for all $y \in \mathbb{R}^{1 \times m}$ we have
$$
\left|x_1 y^{\top}-x_2 y^{\top}\right| \leq \sqrt{2 \delta}\|y\|_2
$$
\end{lemma}

\begin{lemma}[\citep{liu2023scissorhands}]
\label{theorem: lemma2}
Let $\ell \in[t]$ be given. Suppose that $x_{\ell} A x_{\ell}^{\top}>\epsilon^{-1}\left|x_j A x_{\ell}^{\top}\right|$ for all $j \neq \ell$. Then we have
$$
\left(\mathcal{S}(t)_{\ell}-\epsilon\right) x_{\ell}^{\top} a x_{\ell} \leq x_{\ell}^{\top} W_K^{\top} W_Q a_t \leq\left(\mathcal{S}(t)_{\ell}+\epsilon\right) x_{\ell}^{\top} a x_{\ell}
$$
\end{lemma}

Based on these two lemmas, we can formally prove the theorem. Let $x_1=\frac{a_t}{\left\|a_t\right\|}$, and $x_2=x_{t+1}$. If $\frac{a_tx^\top_{t+1}}{\left\|a_t\right\|}\ge1-\delta$, using the conclusion of Lemma \ref{theorem: lemma1}, we have 
$$
\left|\frac{a_t}{(t+1)\left\|a_t\right\|_2} W_Q W_K^T x_{\ell}^T-\frac{1}{t+1}x_{t+1} W_Q W_K^T x_{\ell}^T\right| \leq \frac{\sqrt{2 \delta}}{t+1}\left\|W_Q W_K^T x_{\ell}^T\right\|_2
$$
Since $\lambda_Q$, $\lambda_K$ are the maximum singular values, respectively. Then we have $\left\|W_Q W_K^{\top} x_{\ell}^{\top}\right\|_2 \leq \lambda_Q \lambda_K\left\|x_{\ell}\right\|_2=\lambda_Q \lambda_K$. This leads to:
\begin{equation}
    \left|\frac{a_t}{(t+1)\left\|a_t\right\|_2} W_Q W_K^T x_{\ell}^T-\frac{1}{t+1}x_{t+1} W_Q W_K^T x_{\ell}^T\right| \leq \frac{\sqrt{2 \delta}}{t+1}\lambda_Q \lambda_K
    \label{eq: proof1}
\end{equation}
Since
$$\left\|a_t\right\|_2  =\left\|\left(\sum_{j=1}^{t-1} \alpha_{t, j} x_j\right) W_V W_O\right\| \leq \lambda_O \lambda_V\left\|\sum_{j=1}^{t-1} \alpha_{t, j} x_j\right\|_2 \leq \lambda_O \lambda_V \sum_{j=1}^{t-1} \alpha_{t, j}\left\|x_j\right\|_2 =\lambda_O \lambda_V,$$
and the theorem assumes $\delta \leq\left(\frac{c \epsilon}{\lambda_Q \lambda_K \lambda_V \lambda_O}\right)^2$, substituting these into Eq. (\ref{eq: proof1}), we can obtain:
\begin{equation}
\label{eq: proof2}
    \frac{\sqrt{2 \delta}}{t+1} \lambda_Q \lambda_K \leq \frac{\sqrt{2} c \epsilon}{(t+1)\lambda_V \lambda_O} \leq \frac{\sqrt{2} c \epsilon}{(t+1)\left\|a_t\right\|_2}\leq \frac{\sqrt{2} \epsilon}{(t+1)\left\|a_t\right\|_2} x_{\ell} a x_{\ell}^T
\end{equation}
The last inequality is obtained from $x_{\ell} a x_{\ell}^T \geq c$. Then combining Formula (\ref{eq: proof1}) and Formula (\ref{eq: proof2}), we have
$$\left|\frac{a_t}{(t+1)\left\|a_t\right\|_2} W_Q W_K^T x_{\ell}^T-\frac{1}{t+1}x_{t+1} W_Q W_K^T x_{\ell}^T\right|\leq \frac{\sqrt{2} \epsilon}{(t+1)\left\|a_t\right\|_2} x_{\ell} a x_{\ell}^T$$

From Lemma \ref{theorem: lemma2}, we have:
$$
\left|\frac{a_t W_Q W_K^{\top} x_{\ell}^{\top}}{(t+1)\left\|a_t\right\|^2}-\frac{\alpha_{t, \ell} x_{\ell} W x_{\ell}^{\top}}{(t+1)\left\|a_t\right\|^2}\right| \leq \frac{\epsilon}{(t+1)\left\|a_t\right\|^2} x_{\ell}^{\top} a x_{\ell}
$$
By the triangle inequality, we can combine the upper bounds:
\begin{equation}
    \begin{aligned}
    \left|\frac{x_{t+1} W_Q W_K^T x_{\ell}^T}{t+1}-\frac{\alpha_{t, \ell} x_{\ell} W x_{\ell}^T}{(t+1)\left\|a_t\right\|_2}\right| &\leq\left|\frac{x_{t+1} W_Q W_K^T x_{\ell}^T}{t+1}-\frac{a_t W_Q W_K^T x_{\ell}^T}{(t+1)\left\|a_t\right\|_2}\right|\\
    &+\left|\frac{a_t W_Q W_K^T x_{\ell}^T}{(t+1)\left\|a_t\right\|_2}-\frac{\alpha_{t, \ell} x_{\ell} W x_{\ell}^T}{(t+1)\left\|a_t\right\|_2}\right|\\&\leq\frac{(1+\sqrt{2})\epsilon}{(t+1)\left\|a_t\right\|^2} x_{\ell}^{\top} a x_{\ell}
\end{aligned}
\end{equation}
Then, we rearrange the inequality, and we can obtain:
$$
\frac{x_{\ell} Wx_{\ell}^{\top}}{(t+1)\left\|a_t\right\|_2}\left(\alpha_{t, \ell}-(1+\sqrt{2}) \epsilon\right) \leq \frac{1}{t+1}x_{t+1} W_Q W_K^{\top} x_{\ell}^{\top} \leq \frac{x_{\ell} W x_{\ell}^{\top}}{(t+1)\left\|a_t\right\|_2}\left(\alpha_{t, \ell}+(1+\sqrt{2} \epsilon\right)
$$
Now we give the lower and upper bounds of $\alpha_{t+1,\ell}=\text{softmax}(1/(t+1)\cdot x_{t+1}W_Q W_K^{\top} X_{t}^{\top})_l$.
\textbf{Upper bound}. Let $\gamma_{t+1,\ell}=1/(t+1)\cdot x_{t+1}W_Q W_K^{\top} x_{\ell}^{\top}$. For the softmax function $\alpha_{t+1,l}=\frac{\exp(\gamma_{t+1,\ell})}{\exp(\gamma_{t+1,\ell})+\sum_{k\ne\ell}\exp(\gamma_{t+1,k})}$, to get the maximum value of $\alpha_{t+1,\ell}$, we need to (1) make the numerator as big as possible, which means $\gamma_{t+1,\ell}$ should take its maximum value $\frac{x_{\ell} W x_{\ell}^{\top}}{(t+1)\left\|a_t\right\|_2}\left(\alpha_{t, \ell}+(1+\sqrt{2}) \epsilon\right)$, (2) make the denominator as small as possible, which means that all other values $\gamma_{t+1,k}$ (when $k \ne \ell$) should be minimized $\frac{x_{k} W x_{k}^{\top}}{(t+1)\left\|a_t\right\|_2}\left(\alpha_{t, k}-(1+\sqrt{2}) \epsilon\right)$. Therefore, 
$$\alpha_{t+1,l}\le \frac{\frac{x_{\ell} W x_{\ell}^{\top}}{(t+1)\left\|a_t\right\|_2}\left(\alpha_{t, \ell}+(1+\sqrt{2}) \epsilon\right)}{\frac{x_{\ell} W x_{\ell}^{\top}}{(t+1)\left\|a_t\right\|_2}\left(\alpha_{t, \ell}+(1+\sqrt{2}) \epsilon\right)+\sum_{k\ne\ell}\frac{x_{k} W x_{k}^{\top}}{(t+1)\left\|a_t\right\|_2}\left(\alpha_{t, k}-(1+\sqrt{2}) \epsilon\right)}$$

\textbf{Lower bound}. Similarly, for the lower bound, we should (1) make the numerator as small as possible, which means $\gamma_{t+1,\ell}$ should take its minimum value $\frac{x_{\ell} W x_{\ell}^{\top}}{(t+1)\left\|a_t\right\|_2}\left(\alpha_{t, \ell}-(1+\sqrt{2}) \epsilon\right)$, (2) make the denominator as large as possible, which means that all other values $\gamma_{t+1,k}$ (when $k \ne \ell$) should be maximized $\frac{x_{k} W x_{k}^{\top}}{(t+1)\left\|a_t\right\|_2}\left(\alpha_{t, k}+(1+\sqrt{2}) \epsilon\right)$. Therefore, 
$$\alpha_{t+1,l}\ge \frac{\frac{x_{\ell} W x_{\ell}^{\top}}{(t+1)\left\|a_t\right\|_2}\left(\alpha_{t, \ell}-(1+\sqrt{2}) \epsilon\right)}{\frac{x_{\ell} W x_{\ell}^{\top}}{(t+1)\left\|a_t\right\|_2}\left(\alpha_{t, \ell}-(1+\sqrt{2}) \epsilon\right)+\sum_{k\ne\ell}\frac{x_{k} W x_{k}^{\top}}{(t+1)\left\|a_t\right\|_2}\left(\alpha_{t, k}+(1+\sqrt{2}) \epsilon\right)}$$
The proof is completed.
\end{proof}

\section{Experimental Details}
\subsection{Datasets}
\label{sec: dataset}
The details of the dataset used in the paper are as follows:
\begin{itemize}[leftmargin=*]
    \item \textbf{Essay} \citep{verma2024ghostbuster}. Each source of this dataset (human-written texts, various LLM-generated texts) contains 1,000 samples. The HGT portion comprises original IvyPanda essays that cover a wide array of subjects and academic levels, from high school through university. For the MGT portion, a tailored prompt was first crafted for each source essay using ChatGPT‑turbo, and that prompt was then submitted to several LLMs, including GPT4All, ChatGPT, ChatGPT‑turbo, ChatGLM, Dolly, and Claude, to generate machine-written essays. This workflow produced a diverse set of model-generated texts that remained aligned with the topics of their corresponding source documents.
    \item \textbf{Reuters} \citep{verma2024ghostbuster}. Built on the Reuters 50–50 authorship benchmark, this dataset contains 1,000 articles from 50 writers, with each author contributing 20 pieces. Replicating the pipeline used for the essay corpus, the team first asked ChatGPT‑turbo to invent a headline for every article. Those auto-generated headlines were then embedded into prompts and submitted to multiple LLMs, including ChatGPT, GPT‑4, ChatGPT‑turbo, ChatGLM, Dolly, and Claude, to create the machine-generated texts.
    \item \textbf{TruthfulQA} \citep{lin2022truthfulqa}. It contains 817 questions covering 38 categories, including health, law, finance, and politics. The generated answers were produced by several large language models, including GPT4, ChatGPT-turbo, ChatGLM, Dolly, ChatGPT, and StableLM.
    \item \textbf{DetectRL} \citep{wu2024detectrl}. In this dataset, the human-authored portion is drawn from four sources: arXiv abstracts dated 2002–2017, XSum news reports, Writing Prompts stories, and Yelp reviews. These types were chosen because they are especially vulnerable to producing convincing but misleading content when LLMs are misapplied. From each source, 2,800 human texts are selected as HGTs. The machine-generated texts are created using four widely used LLMs—GPT-3.5-turbo (ChatGPT), PaLM-2-bison (Google-PaLM), and Llama-2-70b. The dataset further models practical adversarial settings: (1) a paraphrasing attack that rewrites MGTs with the Dipper paraphraser \citep{krishna2023paraphrasing} and Polish paraphraser, and (2) a mixed-text condition where 1/4 of machine-generated sentences is randomly replaced with human-written content while the label remains “machine-generated.”
\end{itemize}

\subsection{Baselines}
\label{sec: baseline}

A detailed description of the baselines used is shown below:
\begin{itemize}[leftmargin=*]
    \item \textbf{Likelihood} \citep{solaiman2019release}. It uses an LLM to calculate the log probability of each token in a text. The average of these probabilities gives a detection score. A higher score indicates a greater chance that the text was generated by LLMs.
    \item {\textbf{Log-Rank} \citep{gehrmann2019gltr}}. Its detection score is created by first using an LLM to rank each token in a text based on its predicted order within a given context. The logarithm of each word's predicted rank is then calculated. The final score is an average of these values, and a lower score is a strong indicator of machine-generated text.
    \item \textbf{Entropy} \citep{gehrmann2019gltr}. Similar to Log-Rank, it calculates a score for a text by taking the average of each token's conditional entropy within its given context. A lower score suggests a higher likelihood that the text was generated by LLMs.
    \item \textbf{DetectGPT} \citep{mitchell2023detectgpt}. It determines if a text is machine-generated by measuring how small changes affect its log probability. The underlying idea is that text created by LLMs is already a high-probability output. So, when it is slightly altered, the new version is likely to have a lower log probability. In contrast, making similar small changes to human-written text does not consistently lower the log probability; it can just as easily stay the same or increase.
    \item \textbf{Fast-DetectGPT (FastGPT)} \citep{bao2024fast}. To overcome the major computational expense of DetectGPT, this approach replaces DetectGPT's resource-intensive perturbation step with a more efficient sampling process. It identifies differences in token selection between humans and LLMs using a conditional probability curvature metric.
    \item \textbf{DNA-GPT} \citep{yangdna2024}. This method involves a two-step process. First, it cuts a text in half and uses the first part to prompt an LLM to generate a new continuation. Next, it examines the differences between the newly created segment and the original one. This comparison, done via N-gram analysis for black-box models or probability divergence for white-box models, reveals a clear distinction between how humans and machines generate text.
    \item \textbf{Repreguard} \citep{chen2025repreguard}. It utilizes key feature directions determined by a proxy model to project and score the representation of the text under test, comparing the result against a threshold. This approach demonstrates extremely high detection accuracy and robustness when dealing with out-of-distribution data and various types of attacks.
    \item \textbf{Lastde} \citep{xutraining}. It introduces time series analysis into machine-generated text detection, overcoming the limitations of traditional methods that ignore local discriminative information by collaboratively modeling the local dynamic features and global statistical indicators of token probability sequences.
    \item \textbf{FourierGPT} \citep{xu2024detecting}. It proposes a detection method based on a likelihood spectrum perspective, which captures subtle differences between human and machine language by analyzing the relative changes in text likelihood values rather than their absolute values.
    \item \textbf{Binoculars} \cite{hans2024spotting}. It is a detection algorithm that requires no training data, accurately distinguishing between human and machine-generated text by comparing the score differences of a pair of pre-trained LLMs.
\end{itemize}

\subsection{Experimental Scenario}
\label{sec: scenario}

To extensively evaluate the effectiveness of the proposed enhancement model, we conduct experiments in the following real-world scenarios:
\begin{itemize}[leftmargin=*]
    \item \textbf{Cross-LLM}. To assess how well the proposed model works across different LLMs, we trained detectors on a single LLM's text and then tested it on a variety of LLMS' texts. The main body of the paper presents the results from training detectors on the GPT4All texts (Essay dataset) and Llama-2-70b texts (DetectRL), and then testing them on various LLMs, as shown in Table \ref{tab: main_auc}. Complete results for every training and testing combination can be found in Appendix \ref{sec: more_comparison} (Tables \ref{tab: main_tpr}-\ref{tab: reuters_claude}).
    \item \textbf{Cross-Domain}. The DetectRL dataset includes texts from four distinct domains: arXiv academic abstracts, XSum news articles, Writing Prompts stories, and Yelp Reviews. We utilized this dataset to evaluate the model's performance across various domains. To achieve this, we trained the detector on one domain and then tested it on the other domains. For this evaluation, all machine-generated texts were created using the default PaLM model. The heatmaps in Figs. \ref{fig: cross0}, \ref{fig: cross1}, and \ref{fig: cross2} illustrate the performance improvement achieved by our enhancement model compared to various baseline detection models.
    \item \textbf{Paraphrasing Attack}. Studies have shown that MGT detection is vulnerable to paraphrase attacks. Therefore, this scenario is used to evaluate the robustness of the MGT detection method. Using the DetectRL dataset, which includes data from the Polish and Dipper paraphraser, we trained our detector on clean, original texts and then evaluated its robustness on these paraphrased texts. Specifically, we trained the detector using clean texts from Llama-2-70b and then tested it on paraphrased texts from several different LLMs. The results can be found in Fig. \ref{fig: robustness_auc} and \ref{fig: robustness_tpr}.
    \item \textbf{Mixed Text}. Because a blend of human and machine-generated text is common in the real world, we use the mixed texts provided by DetectRL for evaluation. It involved randomly swapping out 25\% of the sentences in an LLM-generated text with human-written ones. We conducted two separate experiments on this dataset: (1) The detector was trained on pure, non-mixed text and then tested for its ability to detect the mixed texts. (2) The detector was both trained and tested on the mixed texts themselves. The performance of the detector in these mixed settings is shown in Figs. \ref{fig: mix_auc} and \ref{fig: mix_tpr}. In each sub-figure, the detectors trained on original text are shown on the left, and those trained on mixed text are shown on the right.
    
\end{itemize}

\subsection{{Threat Model and Proxy Settings}}
{We conducted the experiments in a black-box setting:
\begin{itemize}[leftmargin=*]
    \item \textbf{Threat Model}. We operate under a strictly Black-Box setting regarding the target LLM. We assume that the detector is entirely unknown to the LLM that generates the text (e.g., ChatGPT, Claude). Therefore, we employ a Proxy Model to compute token-level metrics (e.g., Log-Likelihood and Rank) for the candidate text, which are reasonable due to the transferability of the extracted metrics.
    \item \textbf{Proxy Models}. For all baselines, we use GPT2 as the proxy model. Additionally, we use GPT2-XL as the scoring model for Fast-DetectGPT. To learn the 2x2 parameter $W_{mrf}$ introduced by our strategy, we perform training based on various LLM texts, as explicitly stated in the "Training Text" of the table. For example, in Table 2 in the paper, we learn $W_{mrf}$ on GPT4All texts of the Essay dataset and Llama-2-70b texts of the DetectRL dataset, and then test all candidate texts (from ChatGPT, Claude, or Dolly, etc.) using the learned detector.
\end{itemize}
}
\subsection{Parameter settings}

We conducted five independent experiments to ensure the consistency of our results, using five fixed random seeds (1-5). For all datasets, we used 10\% of the data for training, while the remaining 90\% was split evenly between validation and testing. To ensure a fair comparison, the enhanced models shared the same hyperparameters as their base models. In our enhancement model, there are two hyperparameters: the transition center $t_0$ in the positional weighting function $\beta(t)$ and the number of iterations $T$ in the MRF layer. By default, we set $t_0=30$ and $T=10$ for the enhanced versions of all detectors across the three datasets, highlighting the flexibility of our approach. For training the MRF layer, we use a learning rate of 0.05 and train for 10 epochs. Hyperparameter sensitivity analyses are provided in Appendix \ref{sec: sensitivity}.

\section{More Experimental Resuls}

\subsection{More Results of Token Score Distribution before and after Enhancement}
\label{sec: more_movation}

In addition to the partial results on DetectGPT presented in the main text, we also present the complete results about token-level detection score distributions for Log-Likelihood, Log-Rank, Entropy, and DetectGPT in Figs. \ref{fig: motivation_loss}, \ref{fig: motivation_Log_Rank_detail}, \ref{fig: motivation_Entropy_detail}, and \ref{fig: motivation_detectgpt}.
The results are similar to those in the main text: the original detector's scores show substantial overlap between human- and machine-generated text. However, after calibration using our proposed augmentation strategy, the scores achieve significantly improved discriminability.

\begin{figure*}[t]
	\centering
	\includegraphics[width=1.\linewidth]{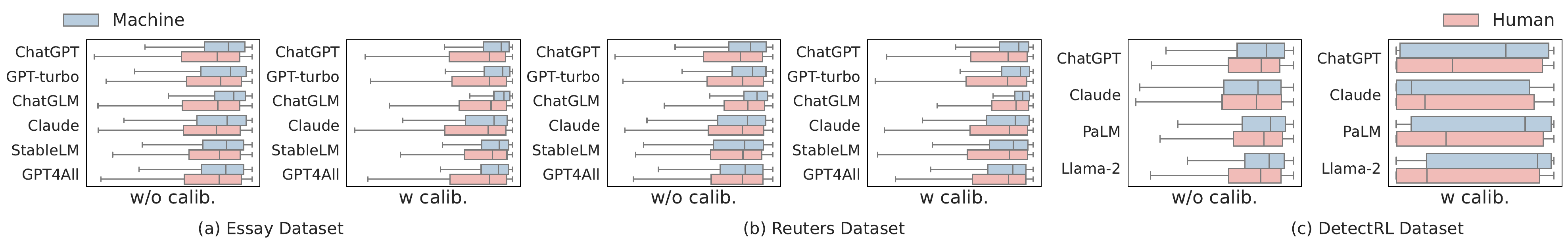}
    \vspace{-0.7cm}
	\caption{Distribution of token scores obtained by the Log-Likelihood method without and with enhancement. It can be observed that the proposed method enhances the discriminative nature of the token scores.}
	\label{fig: motivation_loss}
\end{figure*}

\begin{figure*}[t]
	\centering
	\includegraphics[width=1.\linewidth]{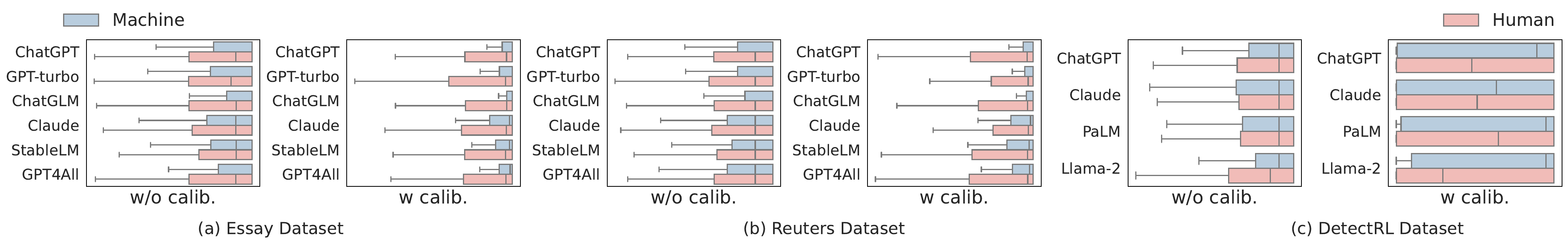}
    \vspace{-0.7cm}
	\caption{Distribution of token scores obtained by the Log-Rank method without and with enhancement. It can be observed that the proposed method enhances the discriminative nature of the token scores.}
	\label{fig: motivation_Log_Rank_detail}
\end{figure*}

\begin{figure*}[t]
	\centering
	\includegraphics[width=1.\linewidth]{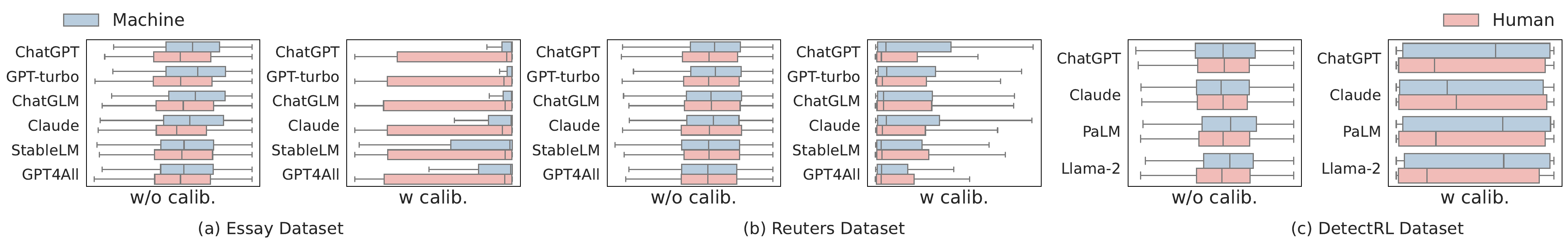}
    \vspace{-0.7cm}
	\caption{Distribution of token scores obtained by the Entropy method without and with enhancement. It can be observed that the proposed method enhances the discriminative nature of the token scores.}
	\label{fig: motivation_Entropy_detail}
\end{figure*}

\begin{figure*}[t]
	\centering
	\includegraphics[width=1.\linewidth]{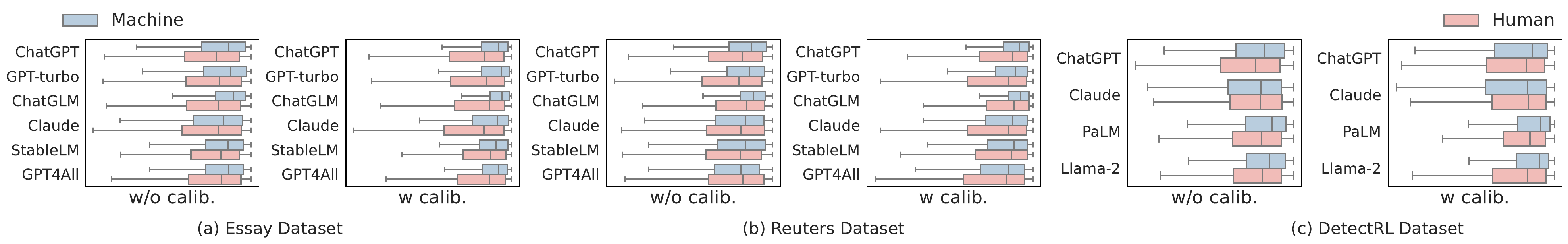}
    \vspace{-0.7cm}
	\caption{Distribution of token scores obtained by the DetectGPT method without and with enhancement. It can be observed that the proposed method enhances the discriminative nature of the token scores.}
	\label{fig: motivation_detectgpt}
\end{figure*}

\subsection{More Results of Context Token Relationships}
\label{sec: more_relation}

In the main text, we demonstrated the existence of neighbor similarity (Fig. \ref{fig:motivation_relation}) and initial instability (Fig. \ref{fig:motivation_relation}) in token-level detection scores through experiments on the Essay dataset. Here, we provide additional supplementary results to further validate these relationships.

To verify neighbor similarity, we provide additional results using Entropy and DetectGPT detection scores on the Essay dataset (Fig. \ref{fig: motivation_relation1}), as well as results on the Reuters and DetectRL datasets (Figs. \ref{fig: motivation_relation_Reuters} and \ref{fig: motivation_relation_DetectRL}). {In addition, we provide Chinese texts on the CUDRT dataset (Fig. \ref{fig: motivation_relation_CUDRT}).} These supplementary experiments consistently show that the closer the tokens are, the more similar their detection scores are.

Similarly, to verify initial instability, we provide additional results on the Essay dataset (Fig. \ref{fig: motivation_position1}), as well as results on the Reuters and DetectRL datasets (Figs. \ref{fig: motivation_position_Reuters} and \ref{fig: motivation_position_DetectRL}). {In addition, we provide Chinese texts on the CUDRT dataset (Fig. \ref{fig: motivation_position_CUDRT}).} These results again demonstrate that token scores at the beginning of a text fluctuate significantly before gradually stabilizing.

\begin{figure}[t]
	\centering
	\includegraphics[width=0.6\linewidth]{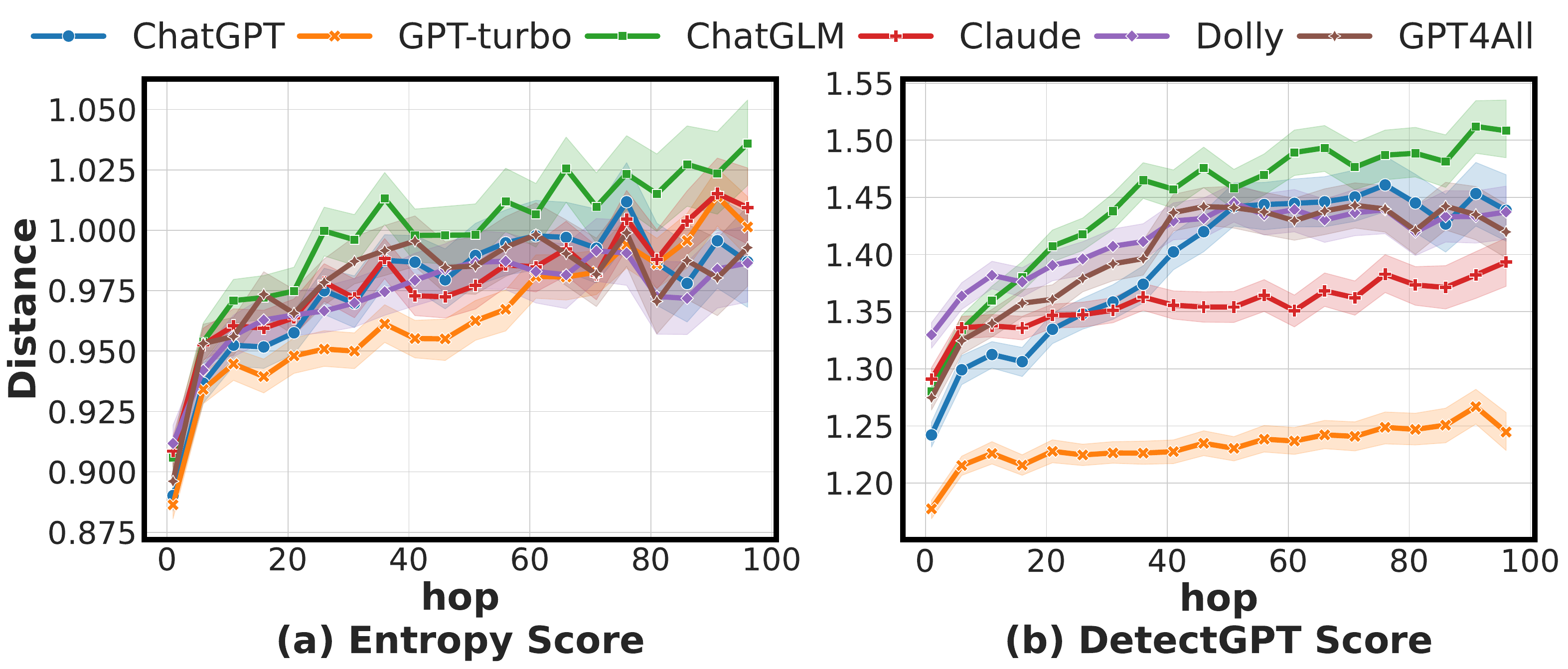}
    \vspace{-0.3cm}
	\caption{The detection score distances {(Mean Absolute Difference)} of neighbors at different hops in the Essay dataset. Entropy and DetectGPT score are used here.}
	\label{fig: motivation_relation1}
\end{figure}

\begin{figure*}[t]
	\centering
	\includegraphics[width=1.\linewidth]{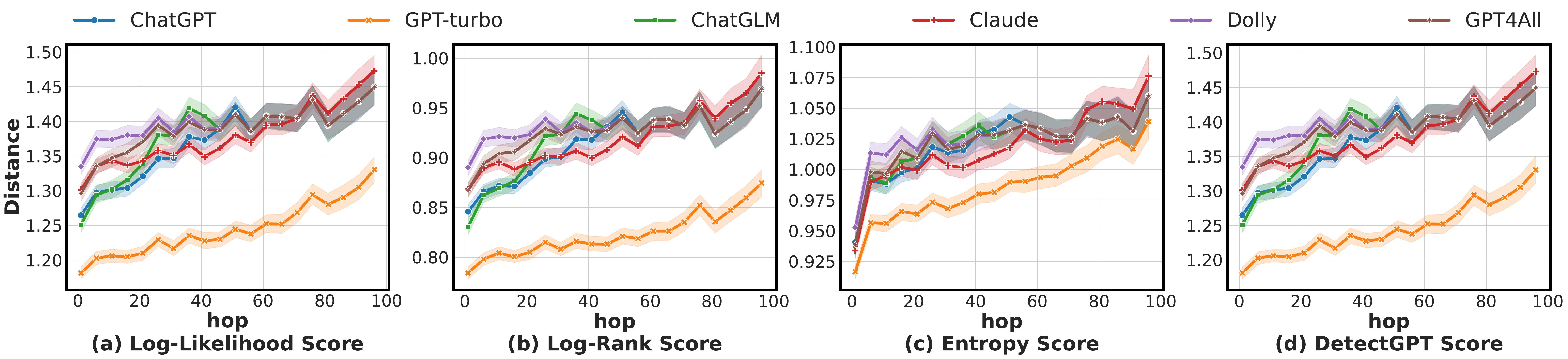}
     \vspace{-0.7cm}
	\caption{The detection score distances {(Mean Absolute Difference)} of neighbors at different hops in the Reuters dataset. Log-Likelihood, Log-Rank, Entropy, and DetectGPT score are used here.}
	\label{fig: motivation_relation_Reuters}
\end{figure*}

\begin{figure*}[t]
	\centering
	\includegraphics[width=1.\linewidth]{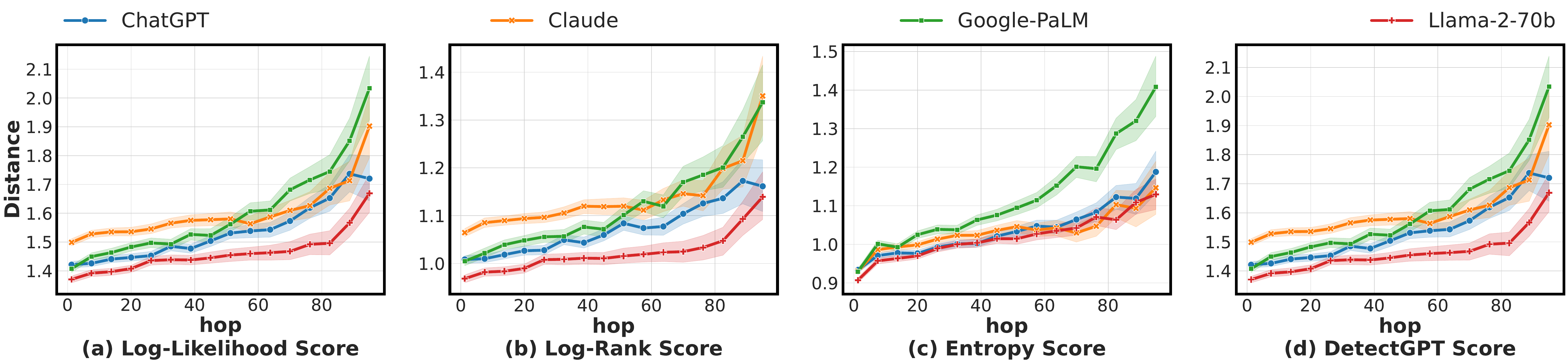}
     \vspace{-0.7cm}
	\caption{The detection score distances {(Mean Absolute Difference)} of neighbors at different hops in the DetectRL dataset. Log-Likelihood, Log-Rank, Entropy, and DetectGPT score are used here.}
	\label{fig: motivation_relation_DetectRL}
\end{figure*}

\begin{figure*}[t]
	\centering
	\includegraphics[width=1.\linewidth]{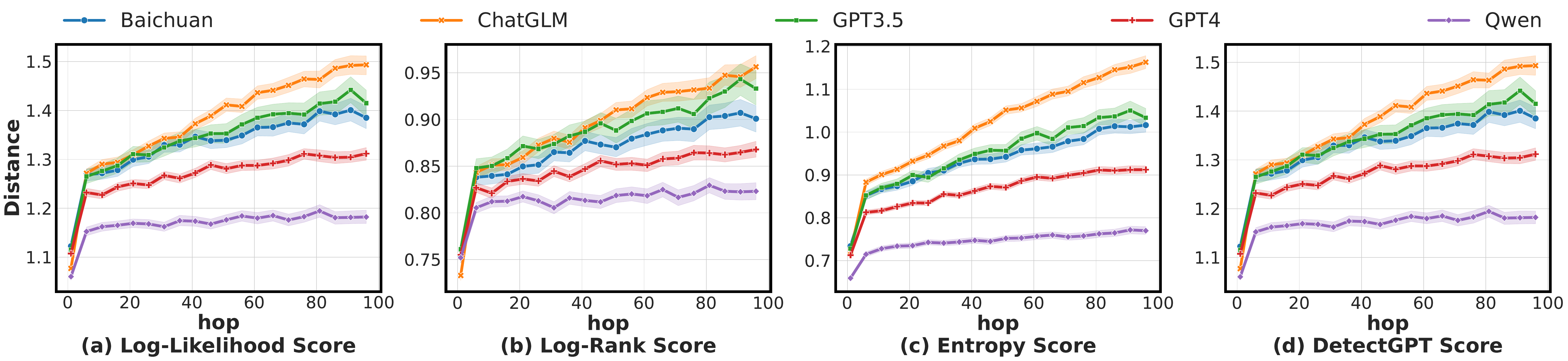}
     \vspace{-0.7cm}
	\caption{The detection score distances {(Mean Absolute Difference)} of neighbors at different hops in the CUDRT dataset. Log-Likelihood, Log-Rank, Entropy, and DetectGPT score are used here.}
	\label{fig: motivation_relation_CUDRT}
\end{figure*}

\begin{figure}[t]
	\centering
	\includegraphics[width=0.6\linewidth]{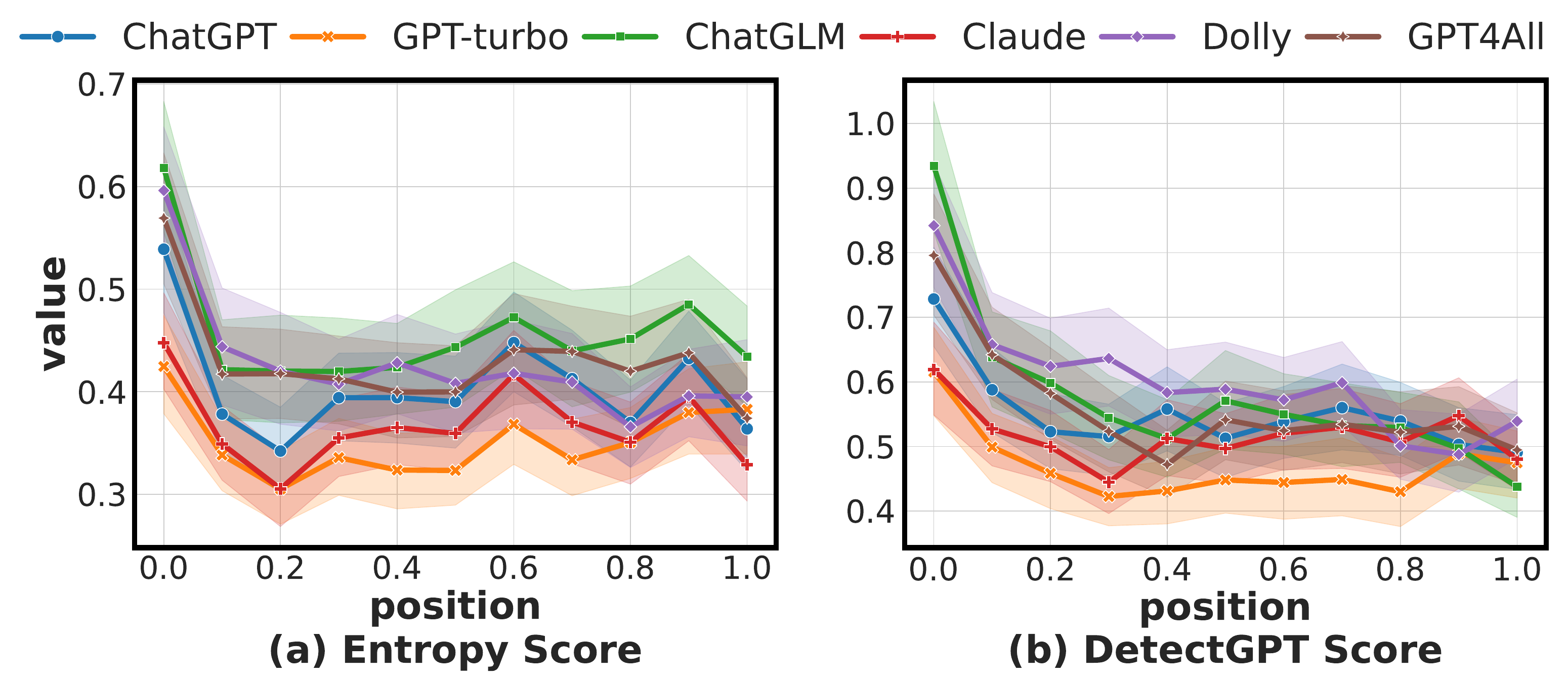}
    \vspace{-0.3cm}
	\caption{The score distances {(Mean Absolute Difference)} of 1-hop neighbors at different positions in the Essay dataset. Entropy and DetectGPT score are used here.}
	\label{fig: motivation_position1}
\end{figure}

\begin{figure*}[t]
	\centering
	\includegraphics[width=1.\linewidth]{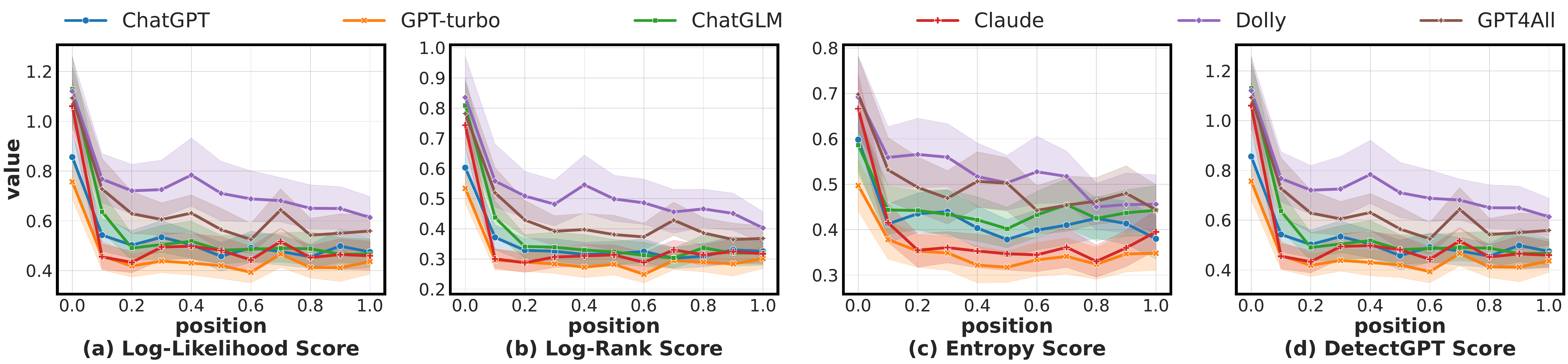}
    \vspace{-0.7cm}
	\caption{The score distances {(Mean Absolute Difference)} of 1-hop neighbors at different positions in the Reuters dataset. Log-Likelihood, Log-Rank, Entropy, and DetectGPT score are used here.}
	\label{fig: motivation_position_Reuters}
\end{figure*}

\begin{figure*}[t]
	\centering
	\includegraphics[width=1.\linewidth]{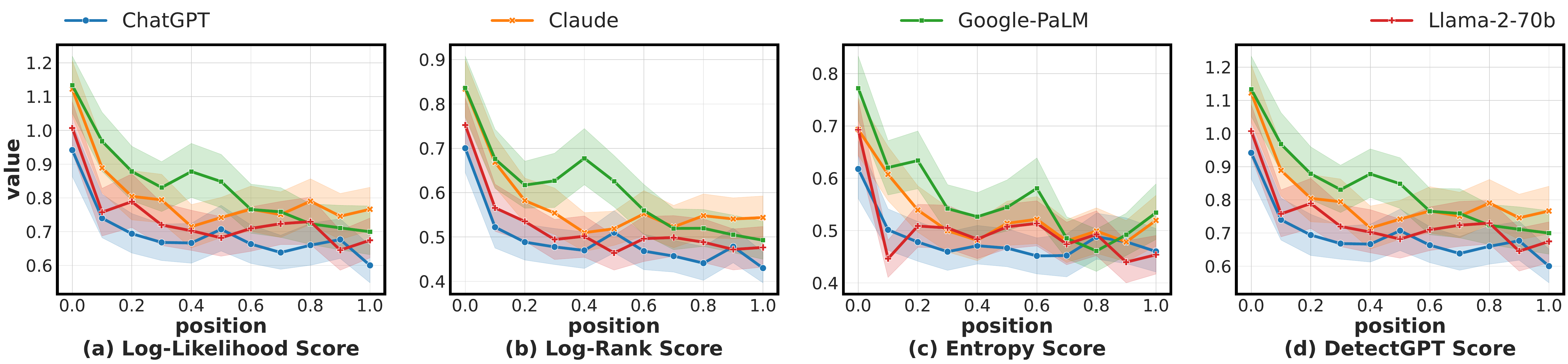}
    \vspace{-0.7cm}
	\caption{The score distances {(Mean Absolute Difference)} of 1-hop neighbors at different positions in the DetectRL dataset. Log-Likelihood, Log-Rank, Entropy, and DetectGPT score are used here.}
	\label{fig: motivation_position_DetectRL}
\end{figure*}

\begin{figure*}[t]
	\centering
	\includegraphics[width=1.\linewidth]{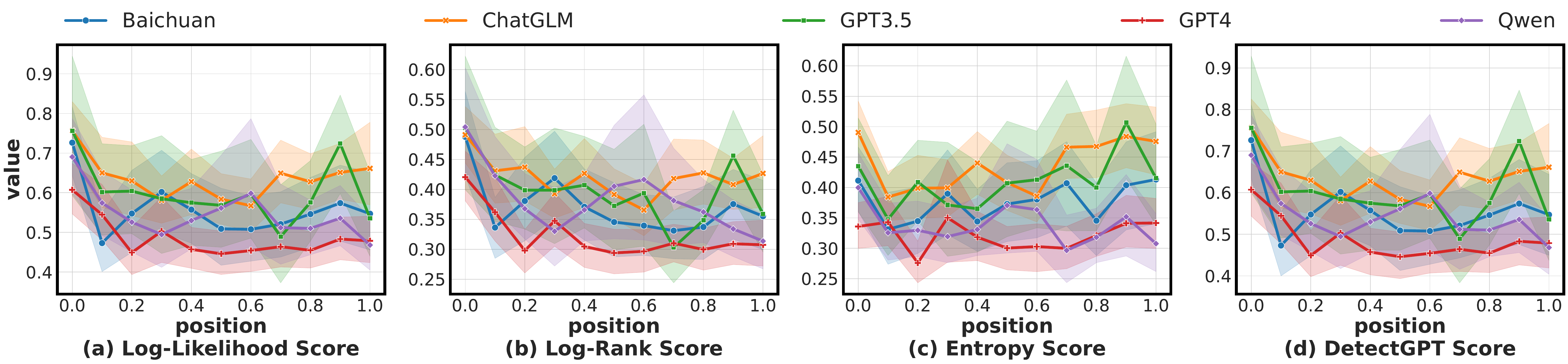}
    \vspace{-0.7cm}
	\caption{The score distances {(Mean Absolute Difference)} of 1-hop neighbors at different positions in the CUDRT dataset. Log-Likelihood, Log-Rank, Entropy, and DetectGPT score are used here.}
	\label{fig: motivation_position_CUDRT}
\end{figure*}

\subsection{More Performance Comparison}
\label{sec: more_comparison}

\textbf{Performance across Different LLMs}. In the main text, we evaluated the cross-LLM performance of detectors trained on GPT4All (Essay) and Llama-2-70b (DetectRL) on various LLM texts in terms of AUROC, as shown in Table \ref{tab: main_auc}. This section aims to provide more comprehensive supplementary experimental results, including: (1) cross-LLM performance performance under the same experimental settings in terms of TPR@FPR-1\% (Table \ref{tab: main_tpr}), and (2) cross-LLM performance comparisons of detectors trained on other LLMs on the Essay, DetectRL, and Reuters datasets (Tables \ref{tab: essay_chatgpt} to \ref{tab: reuters_claude}). These extensive experimental results are consistent with the conclusions of the main paper. Among the 888 cross-LLM evaluation settings, our proposed enhanced model achieved better performance than the original detector in 91.4\% of the cases, highlighting the generalization ability and application value of this method on different models and datasets.

\textbf{Performance across Different Domains}. In addition to the cross-domain performance improvements for Log-Likelihood and Log-Rank demonstrated in the main text, this section provides additional results for other detectors. Specifically, it includes improvements to the cross-domain performance of Entropy and DetectGPT (Fig. \ref{fig: cross1}), as well as improvements to FastGPT and DNA-GPT (Fig. \ref{fig: cross2}).
Combining all experimental results, we reach the same conclusion as in the main text: in most experimental settings, detectors applied with our strategy significantly improve their cross-domain generalization capabilities.

\textbf{Performance against Mixed Texts}. In addition to the AUROC performance comparison for mixed texts presented in the main text, this section provides additional performance comparisons at TPR@FPR-1\%, as shown in Fig. \ref{fig: mix_tpr}. We reach consistent conclusions with those in the main text: in most experimental settings, the detector equipped with the proposed strategy significantly improves its ability to detect mixed text.

\textbf{Performance against Paraphrasing Attacks}. In addition to the AUROC performance comparison for paraphrasing texts presented in the main text, this section also presents a performance comparison under the TPR@FPR-1\% metric, as shown in Fig. \ref{fig: robustness_tpr}. We find that in most experimental settings, the detector applying our proposed strategy significantly improves its robustness against paraphrasing attacks, which is consistent with the conclusions drawn in the main text.

\begin{table}[t]
    \scriptsize   \centering   \caption{Performance concerning TPR@FPR-1\% (\%) on Essay (left) and DetectRL (right).}   \renewcommand\arraystretch{1.1}   \begin{adjustbox}{width=1.\textwidth}   \setlength{\tabcolsep}{0.008\linewidth}     
%
    \end{adjustbox}
  \label{tab: main_tpr}%
\end{table}%

\begin{table*}[htbp]
    \scriptsize   \centering   \caption{Performance on Essay dataset. The detection models are trained on text generated by ChatGPT.}   \renewcommand\arraystretch{0.9}\begin{adjustbox}{width=1.\textwidth}   \setlength{\tabcolsep}{0.008\linewidth}     %
%
    \end{adjustbox}
  \label{tab: essay_chatgpt}%
\end{table*}%

\begin{table*}[htbp]
    \scriptsize   \centering   \caption{Performance on Essay dataset. The detection models are trained on text generated by ChatGPT-turbo.}   \renewcommand\arraystretch{0.9}\begin{adjustbox}{width=1.\textwidth}   \setlength{\tabcolsep}{0.008\linewidth}     %
%
    \end{adjustbox}
  \label{tab:addlabel}%
\end{table*}%

\begin{table*}[htbp]
    \scriptsize   \centering   \caption{Performance on Essay dataset. The detection models are trained on text generated by ChatGLM.}   \renewcommand\arraystretch{0.9}\begin{adjustbox}{width=1.\textwidth}   \setlength{\tabcolsep}{0.008\linewidth}     %
%
    \end{adjustbox}
  \label{tab:addlabel}%
\end{table*}%

\begin{table*}[htbp]
    \scriptsize   \centering   \caption{Performance on Essay dataset. The detection models are trained on text generated by Dolly.}   \renewcommand\arraystretch{0.9}\begin{adjustbox}{width=1.\textwidth}   \setlength{\tabcolsep}{0.008\linewidth}     %
%
    \end{adjustbox}
  \label{tab:addlabel}%
\end{table*}%

\begin{table*}[htbp]
    \scriptsize   \centering   \caption{Performance on Essay dataset. The detection models are trained on text generated by Claude.}   \renewcommand\arraystretch{0.9}\begin{adjustbox}{width=1.\textwidth}   \setlength{\tabcolsep}{0.008\linewidth}     %
%
    \end{adjustbox}
  \label{tab:addlabel}%
\end{table*}%

\begin{table*}[htbp]
    \scriptsize   \centering   \caption{Performance on DetectRL. The detection models are trained on text generated by ChatGPT and Google-PaLM.}   \renewcommand\arraystretch{0.9}\begin{adjustbox}{width=1.\textwidth}   \setlength{\tabcolsep}{0.008\linewidth}     %
%
    \end{adjustbox}
  \label{tab:addlabel}%
\end{table*}%

\begin{table*}[htbp]
    \scriptsize   \centering   \caption{Performance on Reuters dataset. The detection models are trained on text generated by GPT4All.}     \renewcommand\arraystretch{0.9}\begin{adjustbox}{width=1.\textwidth}   \setlength{\tabcolsep}{0.008\linewidth}     %
%
    \end{adjustbox}
  \label{tab:addlabel}%
\end{table*}%

\begin{table*}[htbp]
    \scriptsize   \centering   \caption{Performance on Reuters dataset. The detection models are trained on text generated by ChatGPT.}     \renewcommand\arraystretch{0.9}\begin{adjustbox}{width=1.\textwidth}   \setlength{\tabcolsep}{0.008\linewidth}     %
%
    \end{adjustbox}
  \label{tab:addlabel}%
\end{table*}%

\begin{table*}[htbp]
    \scriptsize   \centering   \caption{Performance on Reuters dataset. The detection models are trained on text generated by ChatGPT-turbo.}     \renewcommand\arraystretch{0.9}\begin{adjustbox}{width=1.\textwidth}   \setlength{\tabcolsep}{0.008\linewidth}     %
%
    \end{adjustbox}
  \label{tab:addlabel}%
\end{table*}%

\begin{table*}[htbp]
    \scriptsize   \centering   \caption{Performance on Reuters dataset. The detection models are trained on text generated by ChatGLM.}     \renewcommand\arraystretch{0.9}\begin{adjustbox}{width=1.\textwidth}   \setlength{\tabcolsep}{0.008\linewidth}     %
%
    \end{adjustbox}
  \label{tab:addlabel}%
\end{table*}%

\begin{table*}[htbp]
    \scriptsize   \centering   \caption{Performance on Reuters dataset. The detection models are trained on text generated by Dolly.}     \renewcommand\arraystretch{0.9}\begin{adjustbox}{width=1.\textwidth}   \setlength{\tabcolsep}{0.008\linewidth}     %
%
    \end{adjustbox}
  \label{tab:addlabel}%
\end{table*}%

\begin{table*}[htbp]
    \scriptsize   \centering   \caption{Performance on Reuters dataset. The detection models are trained on text generated by Claude.}     \renewcommand\arraystretch{0.9}\begin{adjustbox}{width=1.\textwidth}   \setlength{\tabcolsep}{0.008\linewidth}     %
%
    \end{adjustbox}
  \label{tab: reuters_claude}%
\end{table*}%

\begin{figure*}[t]
	\centering
	\includegraphics[width=1.\linewidth]{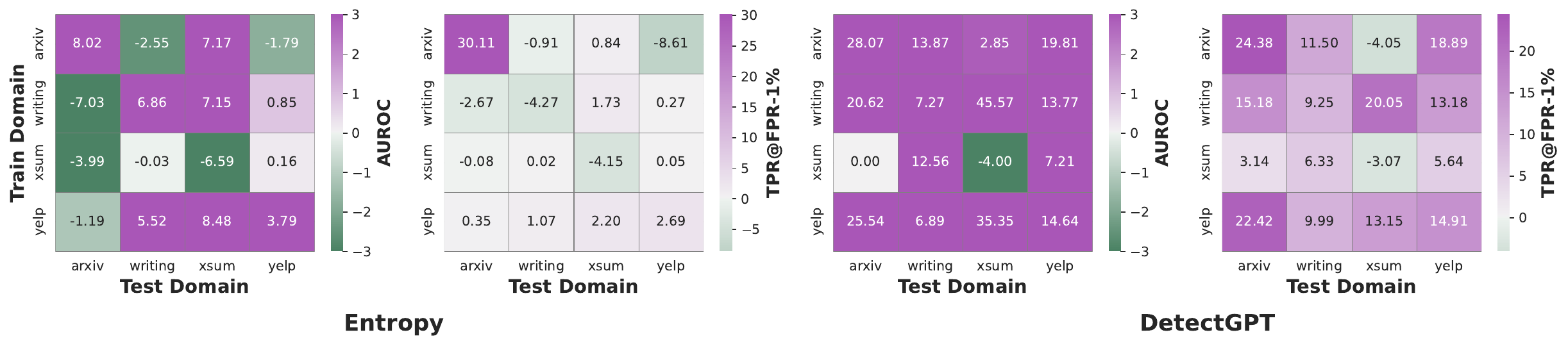}
    \vspace{-0.7cm}
	\caption{The performance improvement of the proposed method on Entropy and DetectGPT. Values greater than 0 indicate an enhanced effect.}
	\label{fig: cross1}
\end{figure*}

\begin{figure*}[t]
	\centering
	\includegraphics[width=1.\linewidth]{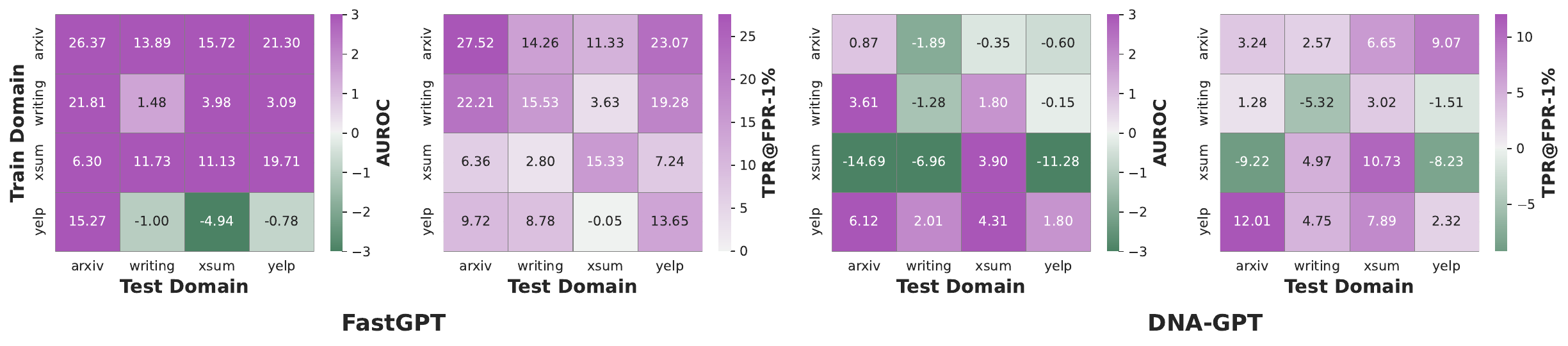}
    \vspace{-0.7cm}
	\caption{The performance improvement of the proposed method on FastGPT and DNA-GPT. Values greater than 0 indicate an enhanced effect.}
	\label{fig: cross2}
\end{figure*}

\begin{figure*}[t]
	\centering
	\includegraphics[width=1.\linewidth]{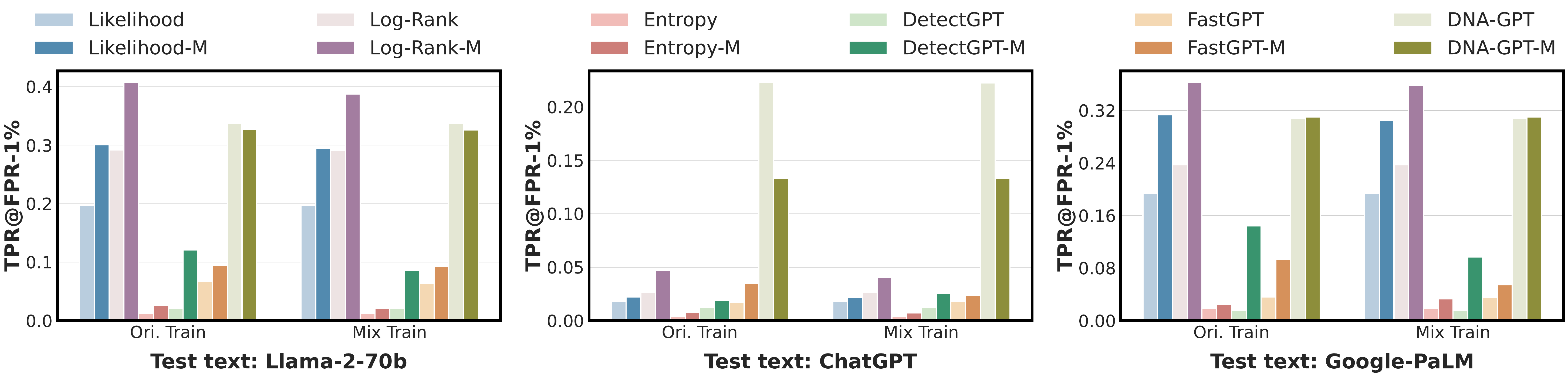}
    \vspace{-0.7cm}
	\caption{Detection performance concerning TPR@FPR-1\% under different LLM mixed texts. All detectors are trained on Llama-2-70b texts.}
	\label{fig: mix_tpr}
\end{figure*}

\begin{figure*}[t]
	\centering
	\includegraphics[width=1.\linewidth]{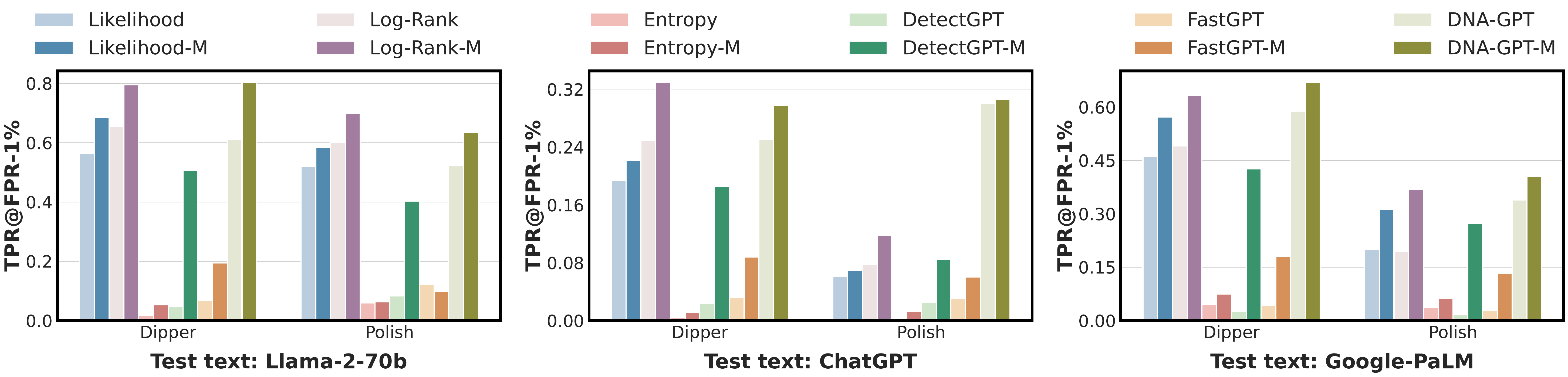}
    \vspace{-0.7cm}
	\caption{Detection performance concerning TPR@FPR-1\% under different paraphrasing texts (Dipper and Polish). All detectors are trained on Llama-2-70b texts.}
	\label{fig: robustness_tpr}
\end{figure*}

\subsection{{More Performance Comparison on the QA Dataset}}
{
In this section, we explore testing on a challenging QA dataset, specifically the TruthfulQA dataset \citep{lin2022truthfulqa}, which covers Q\&A tasks in 38 domains (e.g., health, law, and finance). The results are shown in Table \ref{tab: truthfulqa_auc}. As can be seen, our method consistently boosts performance. For example, DNA-GPT-M achieves a significant leap from 80.17\% to 88.32\% (+8.15\%) in average AUROC.}

\subsection{{More Performance Comparison on Short-text}}
{
In this section, we performed a stress test on the Essay dataset with a strict 50-word limit. In Table \ref{tab: short}, our calibration remains highly effective even with limited context. For example, we improve FastGPT by 5.35\% and DNA-GPT by 3.03\% on average. This confirms our calibration still holds even in short sequences.}

\subsection{{Enhancement Experiments on More Detectors}}
{
In this section, we integrated more state-of-the-art metric-based baselines: RepreGuard \cite{chen2025repreguard}, Binoculars \cite{hans2024spotting}, Lastde \cite{xutraining}, and FourierGPT \cite{xu2024detecting}, and applied our strategy to them as “E” suffix. As shown in Tables \ref{tab: more_essay}-\ref{tab: more_detectrl}, our method functions as a universal plug-in, enhancing these detectors. For example, on the Reuters dataset, our method improves RepreGuard by 8.97\% and Binoculars by 4.42\%.
}
\subsection{{Performance Comparison against Perturbation Texts}}
{
In this section, we evaluated the robustness against perturbed texts. Here, we chose two distinct attack methods in the DetectRL dataset: Back-Translation and Word Perturbation. The results are shown in Tables \ref{tab: perturbation}. Across both perturbation scenarios, detectors equipped with our Markov-informed calibration (e.g., Likelihood-M, Log-Rank-M) consistently outperformed 
the uncalibrated ones, typically by margins of 10-20\%. These findings align perfectly with our existing evaluation on Paraphrasing Attacks, which shows that our method effectively mitigates the performance degradation typically caused by text perturbations.}

{These encouraging gains are consistent with the fundamental theory of Markov random fields. The adversarial perturbation acts as token-level salt-and-pepper noise, while our Markov-informed calibration strategy (modeling Neighbor Similarity) leverages the remaining uncorrupted context to smooth out these local anomalies, recovering the latent detection signal.}

\subsection{More Results of Ablation Study}
\label{sec: more_ablation}

In addition to the ablation experiments on the position weight function and MRF layer presented in the main text for the Essay dataset, we also provide ablation results for the Reuters and DetectRL datasets, as shown in Figs. \ref{fig: improve_DetectRL} and \ref{fig: improve_Reuters}. These experimental results are consistent with the conclusions of the main text: removing either component leads to a significant performance drop; even retaining only one of them outperforms the baseline detector, strongly demonstrating the effectiveness of both components.

\begin{table}[htbp]
  \scriptsize   \centering   \caption{{Performance on TruthfulQA dataset. The detection models are trained on text generated by GPT4.}}     \renewcommand\arraystretch{0.9}\begin{adjustbox}{width=1.\textwidth}   \setlength{\tabcolsep}{0.008\linewidth}   {  
%
    }
    \end{adjustbox}
  \label{tab: truthfulqa_auc}%
\end{table}%

\begin{table}[htbp]
  \scriptsize   \centering   \caption{{Performance on short texts on Essay. The detection models are trained on text generated by GPT4All.}}     \renewcommand\arraystretch{0.9}\begin{adjustbox}{width=1.\textwidth}   \setlength{\tabcolsep}{0.008\linewidth}     
  {
    %
%
    }
    \end{adjustbox}
  \label{tab: short}%
\end{table}%

\begin{table}[htbp]
  \scriptsize   \centering   \caption{{Enhancement results on more detectors in the Essay dataset. The detection models are trained on text generated by GPT4All.}}     \renewcommand\arraystretch{0.9}\begin{adjustbox}{width=1.\textwidth}   \setlength{\tabcolsep}{0.008\linewidth}   
  {
    %
%
    }
    \end{adjustbox}
  \label{tab: more_essay}%
\end{table}%

\begin{table}[htbp]
  \scriptsize   \centering   \caption{{Enhancement results on more detectors in the Reuters dataset. The detection models are trained on text generated by GPT4All.} }    \renewcommand\arraystretch{0.9}\begin{adjustbox}{width=1.\textwidth}   \setlength{\tabcolsep}{0.008\linewidth}   
  {
    %
%
    }
    \end{adjustbox}
  \label{tab: more_reuters}%
\end{table}%

\begin{table}[htbp]
  \scriptsize   \centering   \caption{{Enhancement results on more detectors in the DetectRL dataset. The detection models are trained on text generated by GPT4All. Note that DetectRL lacks paired data for RepreGuard; therefore, this method is not compared.} }    \renewcommand\arraystretch{0.9}\begin{adjustbox}{width=0.7\textwidth}   \setlength{\tabcolsep}{0.008\linewidth}   
  {
    %
%
    }
    \end{adjustbox}
  \label{tab: more_detectrl}%
\end{table}%

\begin{table}[htbp]
  \scriptsize   \centering   \caption{{Performance (AUROC) on perturbation texts (Back-Translation and DeepWordBug) on DetectRL. All detectors are trained on Llama-2-70b texts.}   }  \renewcommand\arraystretch{0.9}\begin{adjustbox}{width=0.9\textwidth}   \setlength{\tabcolsep}{0.008\linewidth}     
  {
    %
%
    }
    \end{adjustbox}
  \label{tab: perturbation}%
\end{table}%

\begin{figure*}[t]
	\centering
	\includegraphics[width=1.\linewidth]{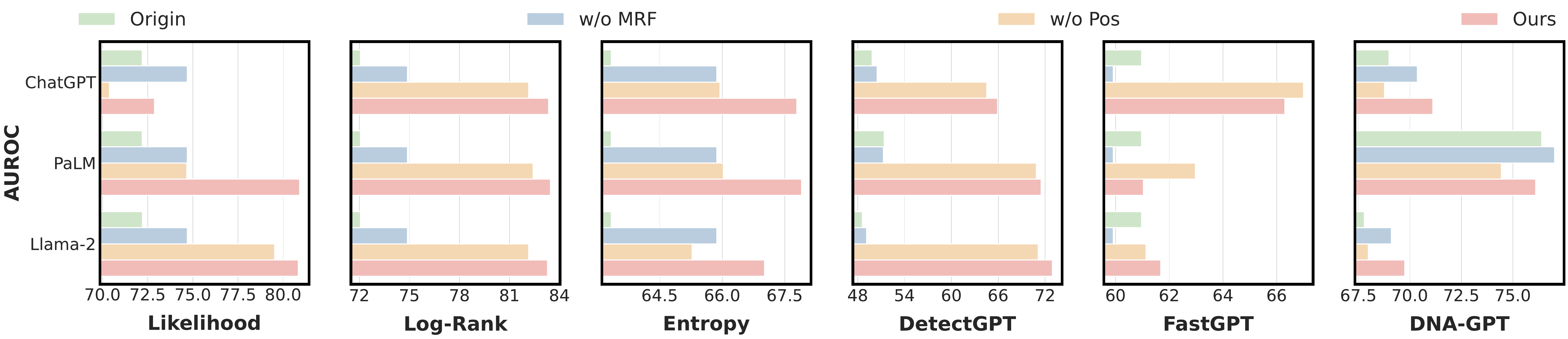}
    \vspace{-0.7cm}
	\caption{Ablation results on the DetectRL dataset. The y-axis represents the LLM text on which the detector was trained, and the x-axis represents the average performance across LLMs.}
	\label{fig: improve_DetectRL}
\end{figure*}

\begin{figure*}[t]
	\centering
	\includegraphics[width=1.\linewidth]{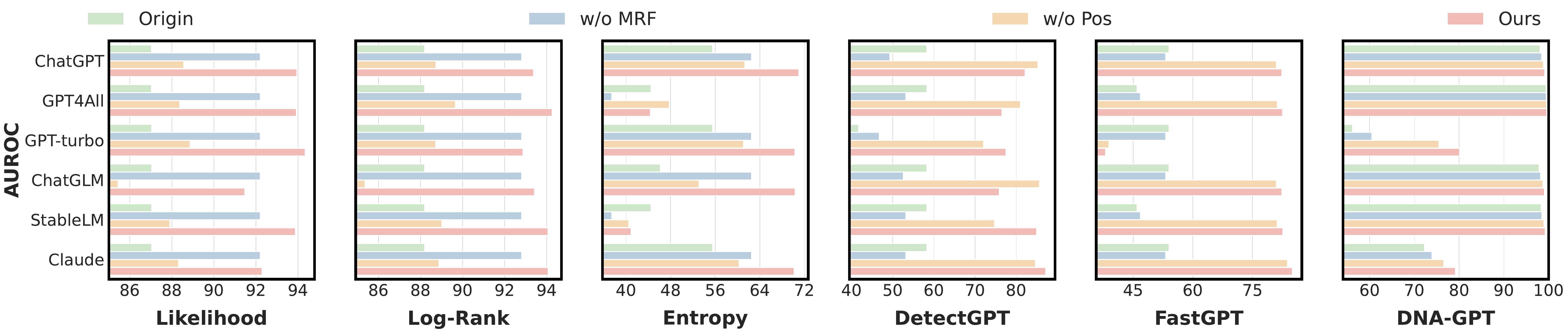}
    \vspace{-0.7cm}
	\caption{Ablation results on the Reuters dataset. The y-axis represents the LLM text on which the detector was trained, and the x-axis represents the average performance across LLMs.}
	\label{fig: improve_Reuters}
\end{figure*}

\subsection{Sensitivity Analysis}
\label{sec: sensitivity}

\textbf{Sensitivity w.r.t. transition center $t_0$}. In our experiments, we set the default transition center $t_0$ of the position weighting function to 30. This section examines the effect of varying $t_0$ values on detection performance. The AUROC and TPR@FPR-1\% results are shown in Figs. \ref{fig: sensitive_pos_auc} and \ref{fig: sensitive_pos_tpr}, respectively. The experimental results show that detection performance steadily improves with increasing $t_0$ values, which is consistent with the conclusions of the ablation experiments and demonstrates the effectiveness of the position weighting function. However, performance improvement is not infinite. When $t_0$ values are too large, performance gradually saturates or even declines. This is likely because excessively large $t_0$ filters out useful token scores. Therefore, a trade-off is necessary. Through sensitivity analysis, we recommend setting $t_0$ values between 20 and 30.

\textbf{Sensitivity w.r.t. iterations $T$ in MRF layer}. We compute the posterior probability of the Markov random field using a multi-step iterative approach. To this end, we evaluate the impact of varying the number of iterations on detection performance, as shown in Figs. \ref{fig: sensitive_iter_auc} and \ref{fig: sensitive_iter_tpr}.
The experimental results demonstrate that multi-step iterative computation significantly enhances detection performance compared to single-step computation, underscoring its importance in score calibration. However, performance may degrade with increasing the number of iterations, possibly due to oversmoothing of the detection scores. Based on our sensitivity analysis, we recommend setting the number of iterations to 10.

\begin{figure*}[t]
	\centering
	\includegraphics[width=1.\linewidth]{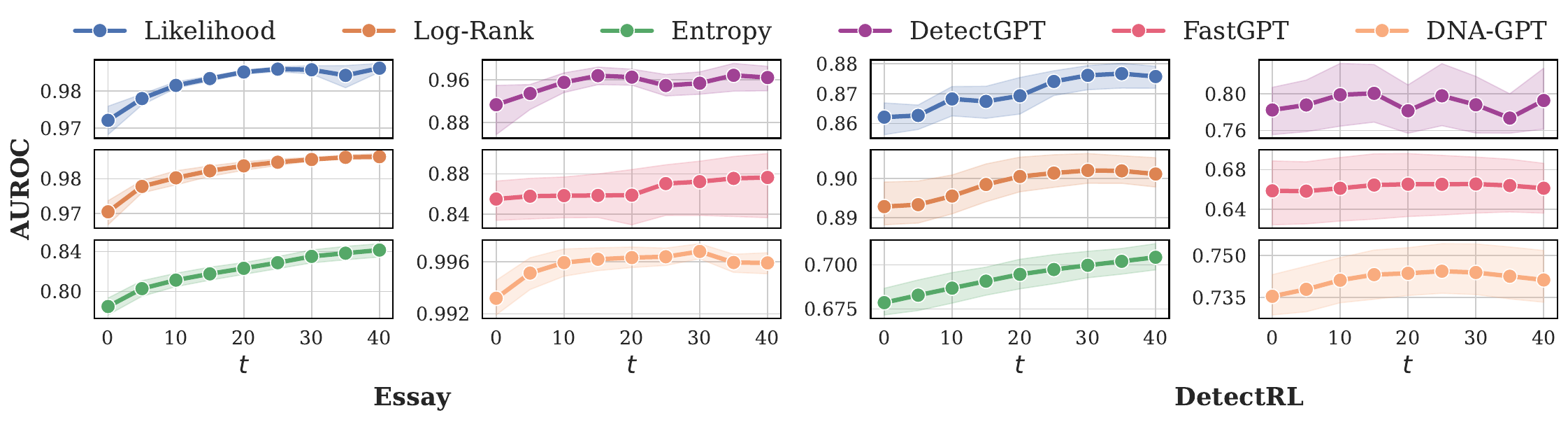}
    \vspace{-0.7cm}
	\caption{Detection performance of AUROC under different transition center $t_0$.}
	\label{fig: sensitive_pos_auc}
\end{figure*}

\begin{figure*}[t]
	\centering
	\includegraphics[width=1.\linewidth]{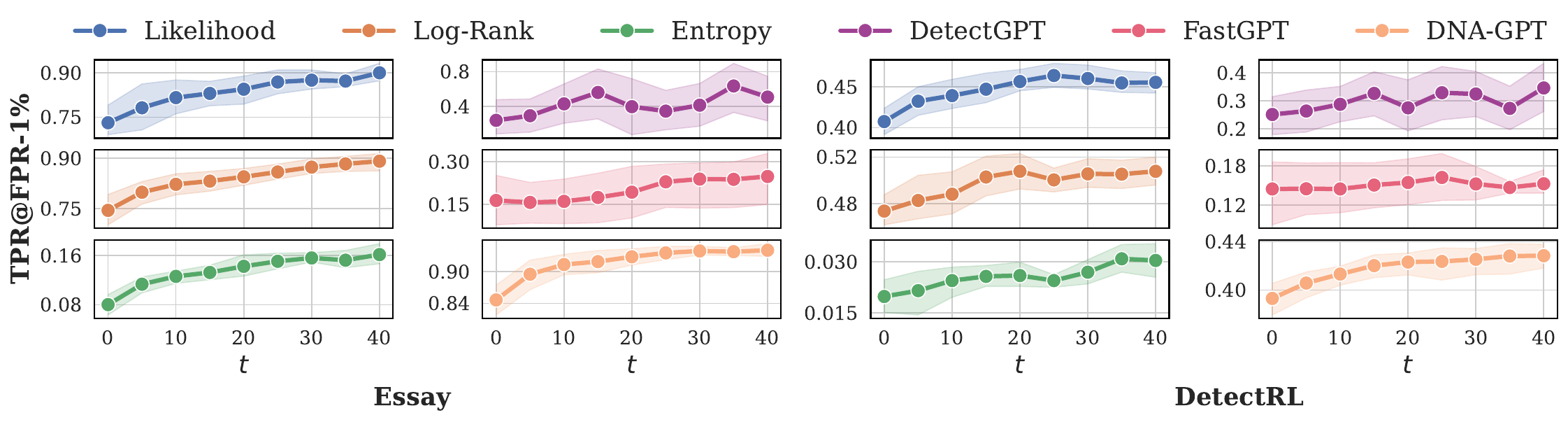}
    \vspace{-0.7cm}
	\caption{Detection performance of TPR@FPR-1\% under different transition center $t_0$.}
	\label{fig: sensitive_pos_tpr}
\end{figure*}

\begin{figure*}[t]
	\centering
	\includegraphics[width=1.\linewidth]{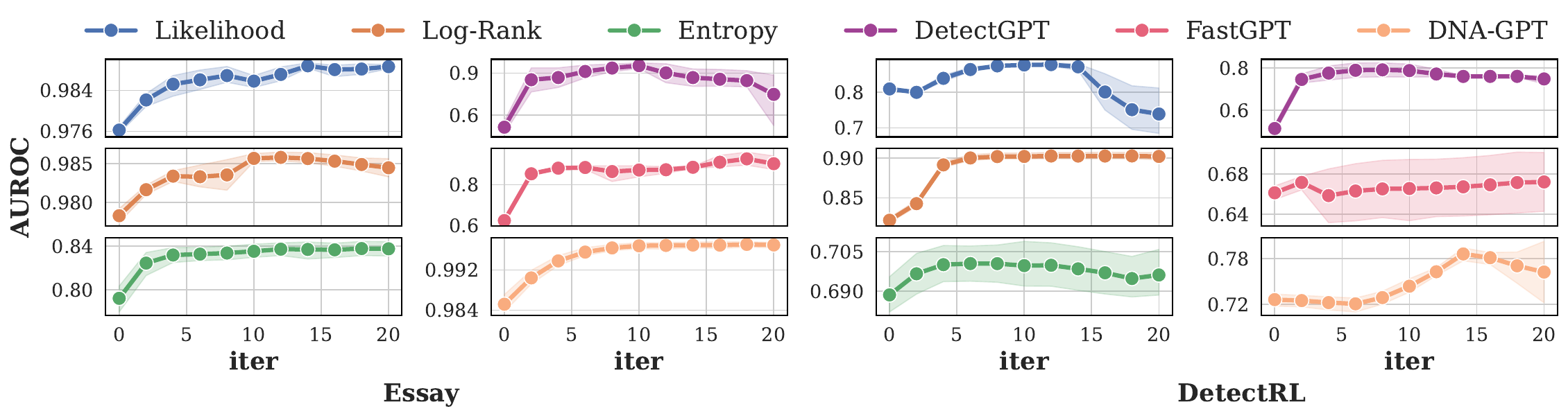}
    \vspace{-0.7cm}
	\caption{Detection performance of AUROC at different numbers of MRF layer iterations.}
	\label{fig: sensitive_iter_auc}
\end{figure*}

\begin{figure*}[t]
	\centering
	\includegraphics[width=1.\linewidth]{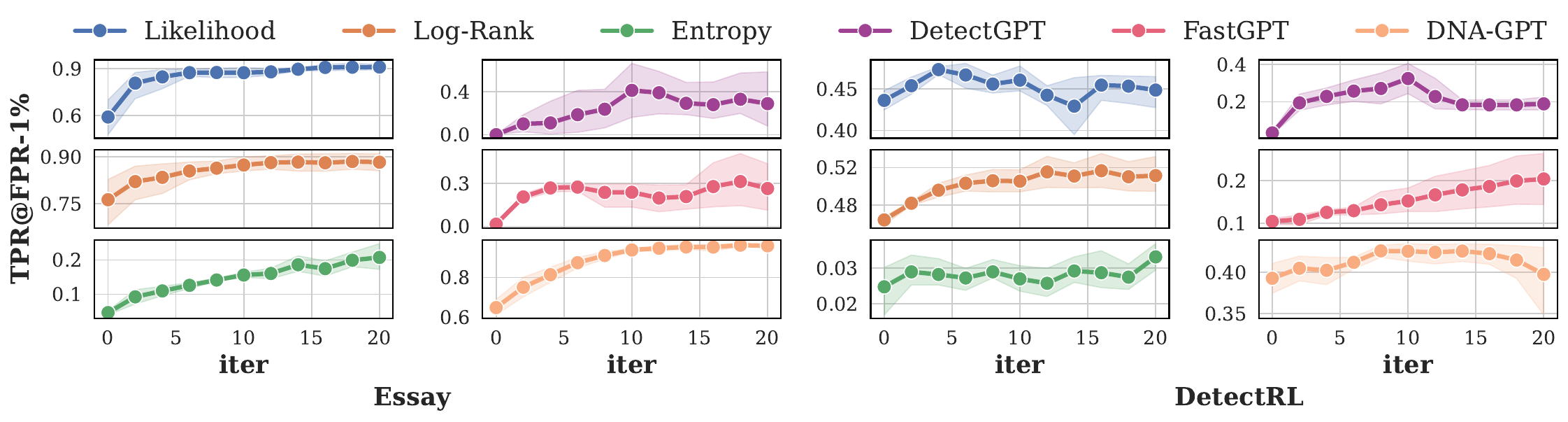}
    \vspace{-0.7cm}
	\caption{Detection performance of TPR@FPR-1\% at different numbers of MRF layer iterations.}
	\label{fig: sensitive_iter_tpr}
\end{figure*}

\subsection{More Results Comparing with Neural Network Calibration}
\label{sec: more_nn}

In the main text, we demonstrated, through experimental results using the Log-Likelihood score, that while neural network-based detection score correction performs well on the intra-LLM, its performance drops sharply on the cross-LLM. This section further presents comparative results using other detection scores, as shown in Figs. \ref{fig: nn_Log_Rank_detail} and \ref{fig: nn_Entropy_detail}.
Similar experimental findings further demonstrate that this method does not truly learn universal score correction capabilities, but rather overfits the training data. This strongly emphasizes the superiority and rationality of our proposed Markov-informed score calibration method.

\begin{figure*}[t]
	\centering
	\includegraphics[width=1.\linewidth]{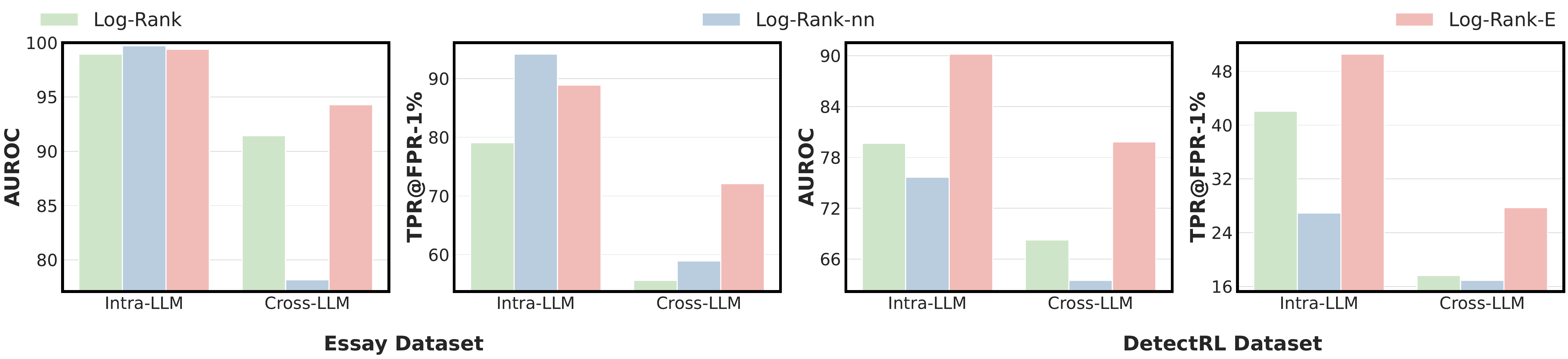}
    \vspace{-0.7cm}
	\caption{Comparison with NN-based methods. The detector used is Log-Rank.}
	\label{fig: nn_Log_Rank_detail}
\end{figure*}

\begin{figure*}[t]
	\centering
	\includegraphics[width=1.\linewidth]{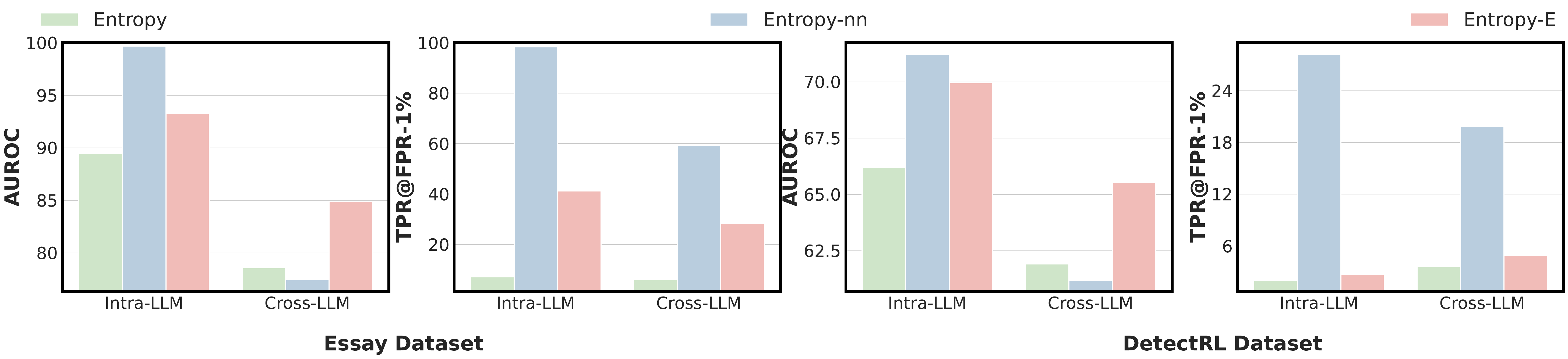}
    \vspace{-0.7cm}
	\caption{Comparison with NN-based methods. The detector used is Entropy.}
	\label{fig: nn_Entropy_detail}
\end{figure*}

\subsection{{MRF vs. HMM}}
{
In the paper, we chose the MRF over Markov chain for two key reasons:
\begin{itemize}[leftmargin=*]
    \item \textbf{Theoretical rationality}. A Markov chain typically models unidirectional dependencies (current state depends on previous state). However, in MGT detection, we identified the Neighbor Similarity property, which is inherently bidirectional. An MRF naturally captures these bidirectional dependencies, whereas a standard Markov Chain struggles to utilize future context during inference without complex multi-pass algorithms.
    \item \textbf{Computational Efficiency}. A raw MRF may be complicated. However, as described in Section 4.2, our Mean-Field Approximation transforms the problem into lightweight iterative matrix multiplications. This allows for massive parallelization on GPUs. In contrast, exact inference in HMMs (e.g., Forward-Backward algorithm) is inherently sequential, which can be slower on modern hardware despite having similar theoretical complexity $\mathcal{O}(N)$.
\end{itemize}
}
{
To further verify, we implemented a Hidden Markov Model (HMM) based calibration strategy and applied it to existing detectors (”HMM” suffix). As shown in Table \ref{tab: hmm} (left part), while the HMM improves upon the original baseline, it consistently underperforms our MRF approach. This validates the necessity of modeling bidirectional dependencies. For computational efficiency (see the right part of Table \ref{tab: hmm}), our MRF implementation is actually faster in practice due to GPU-friendly matrix operations.
}
\begin{table}[htbp]
\scriptsize   \centering   \caption{{Performance and running time comparison with HMM-based method on Essay. The detection models are traind on text generated by GPT4All.} }    \renewcommand\arraystretch{0.9}\begin{adjustbox}{width=1.\textwidth}   \setlength{\tabcolsep}{0.008\linewidth}
{
    \begin{tabular}{c|ccccccc|cc}
    \toprule
    \multirow{2}[0]{*}{Method} & \multicolumn{7}{c|}{ Detection Performance }           & \multicolumn{2}{c}{Running Time} \\
          & GPT4All  & ChatGPT  & Dolly  & ChatGLM  & Claude  & ChatGPT-turbo  & Avg.  & Train & Inference \\
          \midrule
    Likelihood & 96.16  & 98.79  & 90.90  & 99.29  & 92.76  & 99.13  & 96.17  & 13.58  & 61.21  \\
    Likelihood-HMM & 96.39  & 98.87  & 91.24  & 99.33  & 92.91  & 99.17  & 96.32  & 20.64  & 90.63  \\
    \textbf{Likelihood-M} & \textbf{98.58 } & \textbf{99.47 } & \textbf{94.59 } & \textbf{99.54 } & \textbf{94.82 } & \textbf{99.72 } & \textbf{97.79 } & 15.01  & 72.67  \\
    Log-Rank & 96.55  & 98.95  & 90.08  & 99.36  & 92.01  & 99.24  & 96.03  & 15.98  & 75.60  \\
    Log-Rank-HMM & 96.68  & 98.99  & 90.24  & 99.38  & 92.08  & 99.27  & 96.10  & 21.77  & 103.70  \\
    \textbf{Log-Rank-M} & \textbf{98.57 } & \textbf{99.41 } & \textbf{93.82 } & \textbf{99.55 } & \textbf{92.91 } & \textbf{99.64 } & \textbf{97.32 } & 17.40  & 87.55  \\
    Entropy & 74.19  & 89.49  & 73.26  & 84.11  & 86.58  & 95.94  & 83.93  & 13.26  & 64.49  \\
    Entropy-HMM & 74.47  & 89.70  & 73.56  & 84.39  & 86.61  & 95.98  & 84.12  & 16.87  & 88.26  \\
    \textbf{Entropy-M} & \textbf{83.52 } & \textbf{93.28 } & \textbf{81.11 } & \textbf{91.44 } & \textbf{87.96 } & \textbf{96.75 } & \textbf{89.01 } & 14.69  & 73.66  \\
    \bottomrule
    \end{tabular}%
    }
    \end{adjustbox}
  \label{tab: hmm}%
\end{table}%

\subsection{{Comparison with Enhanced Method}}
{
TOCSIN is an enhanced metric-based approach which enhances the detector by dynamically adjusting the decision threshold. Clearly, it operates on a different dimension than our method. Therefore, in addition to supplementing TOCSIN-enhanced baselines (suffix "-T"), our Markov calibration can be further applied to the top of TOCSIN (suffix "-TM") to further explore its detection potential, as shown in Table \ref{tab: tocsin}. It can be seen that our method further improves the performance of TOCSIN-enhanced baselines in most cases. This significant gain suggests that TOCSIN (adjusting dynamic thresholds using token cohesiveness score) and our method (calibration token-level detection score) focus on different aspects, achieving complementarity.
}
\subsection{{Analysis of Cross-domain Failure Cases}}
{
Aside from a few random performance fluctuations, most failures occur when xsum (training data) is applied to arXiv, writing, and yelp (test data). Given that our proposed calibration strategy hinges on learning a reward/penalty matrix ($W_{mrf}$) to encourage neighbor similarity, we examined the distance of neighbor detection score across various domains, as summarized in Fig. \ref{fig: analysis_false}.}

{As observed, xsum exhibits considerably lower neighbor similarity (larger distance), whereas the other domains show higher similarity. This makes sense because, as a news summarization dataset, XSum is extremely information-dense, with a highly compressed syntactic structure and a lack of redundancy and transitional words seen in longer texts (such as arXiv). Consequently, adjacent tokens in XSum experience more drastic fluctuations in detection scores. Therefore, when trained on xsum, the MRF learns parameters ($W_{mrf}$) that tolerate high local variance (treating it as a normal feature of the domain). When these parameters are applied to "smoother" domains like arXiv, the MRF fails to enforce the tighter consistency constraints required for those domains, leading to the observed performance drop. Encouragingly, compared to these occasional failure modes, the extensive enhancements highlight the practicality of our method.}

\begin{table}[htbp]
\scriptsize   \centering   \caption{{Performance (AUROC) on TOCSIN-enhanced baselines and our method on top of TOCSIN (suffix “-TM”). Detectors are trained on GPT4All texts.}}     \renewcommand\arraystretch{0.9}\begin{adjustbox}{width=1.\textwidth}   \setlength{\tabcolsep}{0.008\linewidth}     
{
    \begin{tabular}{c|cccccc|c}
    \toprule
    Method & GPT4  & ChatGPT-turbo  & ChatGLM  & Dolly  & ChatGPT  & StableLM  & Avg. \\
    \midrule
    Likelihood-T & $90.43_{\pm1.03}$ & $98.66_{\pm0.19}$ & $89.50_{\pm0.79}$ & $98.52_{\pm0.55}$ & $90.46_{\pm0.71}$ & $98.44_{\pm0.31}$ & 94.33 \\
    \rowcolor[rgb]{ .949,  .949,  .949} \textbf{Likelihood-TM} & $\textbf{98.08}_{\pm0.19}$ & $\textbf{99.19}_{\pm0.15}$ & $\textbf{92.83}_{\pm1.06}$ & $\textbf{99.01}_{\pm0.51}$ & $\textbf{91.06}_{\pm1.26}$ & $\textbf{99.20}_{\pm0.09}$ & \textbf{96.56} \\
    Log-Rank-T & $93.96_{\pm0.46}$ & $98.78_{\pm0.17}$ & $89.90_{\pm0.83}$ & $98.69_{\pm0.52}$ & $91.01_{\pm0.48}$ & $98.57_{\pm0.27}$ & 95.15 \\
    \rowcolor[rgb]{ .949,  .949,  .949} \textbf{Log-Rank-TM} & $\textbf{98.07}_{\pm0.35}$ & $\textbf{99.10}_{\pm0.16}$ & $\textbf{92.32}_{\pm0.91}$ & $\textbf{99.01}_{\pm0.43}$ & $\textbf{91.46}_{\pm0.62}$ & $\textbf{99.05}_{\pm0.13}$ & \textbf{96.50} \\
    Entropy-T & $76.90_{\pm1.57}$ & $\textbf{98.41}_{\pm0.25}$ & $87.20_{\pm0.92}$ & $98.11_{\pm0.61}$ & $\textbf{90.91}_{\pm0.40}$ & $\textbf{98.72}_{\pm0.19}$ & 91.71 \\
    \rowcolor[rgb]{ .949,  .949,  .949} \textbf{Entropy-TM} & $\textbf{83.89}_{\pm0.82}$ & $98.34_{\pm0.35}$ & $\textbf{87.23}_{\pm0.96}$ & $\textbf{98.25}_{\pm0.53}$ & $89.70_{\pm0.44}$ & $98.63_{\pm0.19}$ & \textbf{92.67} \\
    DetectGPT-T & $48.95_{\pm1.79}$ & $40.69_{\pm11.74}$ & $51.46_{\pm2.98}$ & $40.67_{\pm13.55}$ & $43.99_{\pm7.23}$ & $27.04_{\pm30.41}$ & 42.13 \\
    \rowcolor[rgb]{ .949,  .949,  .949} \textbf{DetectGPT-TM} & $\textbf{95.51}_{\pm2.81}$ & $\textbf{93.27}_{\pm9.08}$ & $\textbf{79.76}_{\pm14.11}$ & $\textbf{97.67}_{\pm2.00}$ & $\textbf{70.25}_{\pm11.65}$ & $\textbf{93.59}_{\pm10.94}$ & \textbf{88.34} \\
    FastGPT-T & $53.53_{\pm5.59}$ & $\textbf{70.36}_{\pm27.49}$ & $56.26_{\pm8.54}$ & $71.83_{\pm28.44}$ & $\textbf{72.06}_{\pm29.44}$ & $\textbf{77.26}_{\pm36.41}$ & \textbf{66.88} \\
    \rowcolor[rgb]{ .949,  .949,  .949} \textbf{FastGPT-TM} & $\textbf{87.73}_{\pm0.15}$ & $66.40_{\pm1.22}$ & $\textbf{80.65}_{\pm0.81}$ & $\textbf{75.36}_{\pm1.49}$ & $41.91_{\pm1.52}$ & $28.10_{\pm2.13}$ & 63.36 \\
    DNAGPT-T & $98.32_{\pm0.21}$ & $97.97_{\pm0.19}$ & $96.60_{\pm0.27}$ & $98.34_{\pm0.17}$ & $95.13_{\pm0.46}$ & $97.93_{\pm0.18}$ & 97.38 \\
    \rowcolor[rgb]{ .949,  .949,  .949} \textbf{DNAGPT-TM} & $\textbf{99.56}_{\pm0.10}$ & $\textbf{99.04}_{\pm0.18}$ & $\textbf{97.70}_{\pm0.33}$ & $\textbf{99.24}_{\pm0.11}$ & $\textbf{96.30}_{\pm0.40}$ & $\textbf{99.06}_{\pm0.19}$ & \textbf{98.48} \\
    \bottomrule
    \end{tabular}%
    }
    \end{adjustbox}
  \label{tab: tocsin}%
\end{table}%

\begin{figure*}[t]
	\centering
	\includegraphics[width=1.\linewidth]{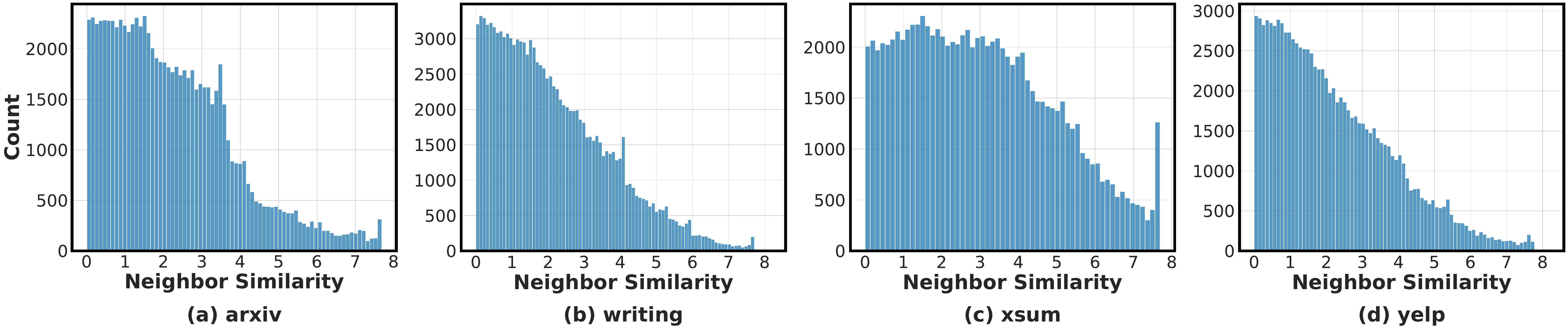}
    \vspace{-0.7cm}
	\caption{{Statistical information on the distance of neighbor detection scores.}}
	\label{fig: analysis_false}
\end{figure*}

\subsection{Comparison with Model-based Detectors}

As shown in Fig. \ref{fig: comparison_model}, we compare the enhanced versions of metric-based methods with model-based detection methods, including ChatGPT-D and MPU. Experimental results show that while model-based methods demonstrate superior performance on the DetectRL dataset, they underperform state-of-the-art metric-based methods, such as DNA-GPT, on the Essay and Reuters datasets.
Notably, the significant performance gap between model-based methods in intra-LLM and cross-LLM further confirms their increased risk of overfitting to the training data. This observation is consistent with our intention of focusing on metric-based detection methods.

\begin{figure*}[t]
	\centering
	\includegraphics[width=1.\linewidth]{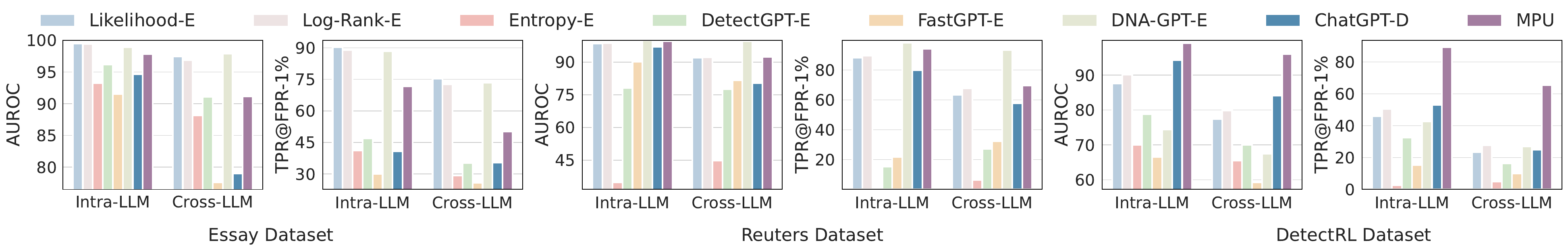}
    \vspace{-0.7cm}
	\caption{Comparison with Model-based detectors. Detectors are trained on GPT4All (Essay and Reuters) and Llama-2-70b (DetectRL).}
	\label{fig: comparison_model}
\end{figure*}

\subsection{Running Time}
\label{sec: running}

Table \ref{tab: running} shows the training and inference runtimes on different datasets. As discussed in the main text, our proposed Markov-based score refinement module can be implemented in constant time via sparse-dense matrix multiplication. Therefore, the additional time overhead introduced by this module is negligible compared to the time-consuming score calculation, highlighting the flexibility and practicality of our approach in practical applications.

\begin{table*}[t]
\small
  \centering
  \caption{Running time (s) of training and inference phases.}
    \begin{tabular}{c|ccc|ccc}
    \toprule
    \multirow{2}[0]{*}{Method} & \multicolumn{3}{c|}{Train} & \multicolumn{3}{c}{Inference} \\
    \noalign{\smallskip} \cline{2-7}\noalign{\smallskip} 
          & Essay & Reuters & DetectRL & Essay & Reuters & DetectRL \\
    \midrule
    Likelihood & 13.58  & 12.77  & 12.40  & 61.21  & 61.05  & 59.34  \\
    \rowcolor[rgb]{ .949,  .949,  .949} \textbf{Likelihood-M} & 15.01  & 14.19  & 13.10  & 72.67  & 72.38  & 61.66  \\
    Log-Rank & 15.98  & 14.38  & 14.61  & 75.60  & 72.99  & 66.20  \\
    \rowcolor[rgb]{ .949,  .949,  .949} \textbf{Log-Rank-M} & 17.40  & 15.81  & 16.29  & 87.55  & 72.06  & 82.21  \\
    Entropy & 13.26  & 12.93  & 13.79  & 64.49  & 63.69  & 63.19  \\
    \rowcolor[rgb]{ .949,  .949,  .949} \textbf{Entropy-M} & 14.69  & 14.38  & 15.61  & 73.66  & 72.80  & 72.63  \\
    DetectGPT & 204.32  & 216.36  & 370.44  & 924.63  & 982.55  & 1672.85  \\
    \rowcolor[rgb]{ .949,  .949,  .949} \textbf{DetectGPT-M} & 206.44  & 218.73  & 373.48  & 929.46  & 984.70  & 1682.78  \\
    FastGPT & 60.21  & 58.67  & 52.97  & 279.50  & 272.18  & 240.20  \\
    \rowcolor[rgb]{ .949,  .949,  .949} \textbf{FastGPT-M} & 63.00  & 61.47  & 55.76  & 286.44  & 284.75  & 254.85  \\
    DNA-GPT & 469.10  & 532.02  & 267.84  & 2113.65  & 2398.61  & 1207.18  \\
    \rowcolor[rgb]{ .949,  .949,  .949} \textbf{DNA-GPT-M} & 471.42  & 534.47  & 271.11  & 2125.17  & 2405.37  & 1221.74  \\
    \bottomrule
    \end{tabular}%
  \label{tab: running}%
\end{table*}%

\section{The Use of Large Language Models}
In our paper, we used LLMs to polish the language and correct grammatical errors. LLMs were not used to generate novel research ideas, design experiments, analyze results, or write substantive technical content. We ensure that the use of large language models is responsible and adheres to academic and ethical standards.

\end{document}